%% file: main_icml2026_1226_25.tex
\documentclass{article}

\usepackage[accepted]{icml2026}
\usepackage{microtype}
\usepackage{graphicx}
\usepackage{subcaption}
\usepackage{booktabs} 

\usepackage{hyperref}

\usepackage{amsmath}
\usepackage{amssymb}
\usepackage{mathtools}
\usepackage{amsthm}

\usepackage{caption}
\usepackage[table]{xcolor}
\usepackage{soul}
\definecolor{lightblue}{RGB}{173, 216, 230}
\definecolor{gold}{RGB}{255, 215, 0}
\definecolor{silver}{RGB}{192, 192, 192}
\definecolor{bronze}{RGB}{205, 127, 50}
\usepackage{multirow}
\usepackage{physics}
\usepackage{enumitem}

\usepackage[most]{tcolorbox}
\usepackage{listings}
\usepackage{xcolor}

\newtcolorbox{promptbox}[1]{
  colback=white,
  colframe=black!70,
  coltitle=white,
  title=\textbf{#1},
  fonttitle=\bfseries,
  colbacktitle=black!70,
  boxrule=0.8pt,
  arc=4pt,
  left=6pt,
  right=6pt,
  top=6pt,
  bottom=6pt
}

\usepackage{bbm}
\usepackage{url}

\usepackage[capitalize,noabbrev]{cleveref}

\theoremstyle{plain}
\newtheorem{theorem}{Theorem}[section]

\theoremstyle{definition}

\theoremstyle{remark}
\newtheorem{remark}[theorem]{Remark}

\newtheorem{thm}[theorem]{Theorem}
\newtheorem{prop}[theorem]{Proposition}

\newtheorem{cor}[theorem]{Corollary}
\newtheorem{dfn}[theorem]{Definition}
\newtheorem{asm}[theorem]{Assumption}
\newtheorem{rem}[theorem]{Remark}

\input{math_commands}

\renewcommand{\hat}{\widehat}
\renewcommand{\tilde}{\widetilde}

\numberwithin{equation}{section}

\icmltitlerunning{LLMs Evaluation with Comparative Signals}

\begin{document}

\twocolumn[
\icmltitle{Evaluating LLMs When They Do Not Know the Answer: 
\texorpdfstring{\\}{} Statistical Evaluation of Mathematical Reasoning via Comparative Signals}



\icmlsetsymbol{equal}{*}

\begin{icmlauthorlist}
\icmlauthor{Zihan Dong}{equal,rutgers}
\icmlauthor{Zhixian Zhang}{equal,rutgers}
\icmlauthor{Yang Zhou}{rutgers}
\icmlauthor{Can Jin}{rutgers}
\icmlauthor{Ruijia Wu}{sjtu}
\icmlauthor{Linjun Zhang}{rutgers}
\end{icmlauthorlist}

\icmlaffiliation{rutgers}{Rutgers University}
\icmlaffiliation{sjtu}{Shanghai Jiao Tong University}

\icmlcorrespondingauthor{Linjun Zhang}{linjun.zhang@rutgers.edu}

\icmlkeywords{LLM Mathematical Evaluation, Semiparametric Inference, Pairwise Comparisons, Efficient Influence Function, Variance Reduction, Sample Efficiency, ICML}

\vskip 0.3in
]


\printAffiliationsAndNotice{\textsuperscript{*}Equal contribution}  

\begin{abstract}
Evaluating mathematical reasoning in LLMs is constrained by limited benchmark sizes and inherent model stochasticity, yielding high-variance accuracy estimates and unstable rankings across platforms. On difficult problems, an LLM may fail to produce a correct final answer, yet still provide reliable pairwise comparison signals indicating which of two candidate solutions is better. We leverage this observation to design a statistically efficient evaluation framework that combines standard labeled outcomes with pairwise comparison signals obtained by having models judge auxiliary reasoning chains. Treating these comparison signals as control variates, we develop a semiparametric estimator based on the efficient influence function (EIF) for the setting where auxiliary reasoning chains are observed. This yields a one-step estimator that achieves the semiparametric efficiency bound, guarantees strict variance reduction over naive sample averaging, and admits asymptotic normality for principled uncertainty quantification. Across simulations, our one-step estimator substantially improves ranking accuracy, with gains increasing as model output noise grows. Experiments on GPQA Diamond, AIME 2025, and GSM8K further demonstrate more precise performance estimation and more reliable model rankings, especially in small-sample regimes where conventional evaluation is pretty unstable.
\end{abstract}

\section{Introduction}

\begin{figure}[!ht]
    \centering
    \includegraphics[width=\linewidth]{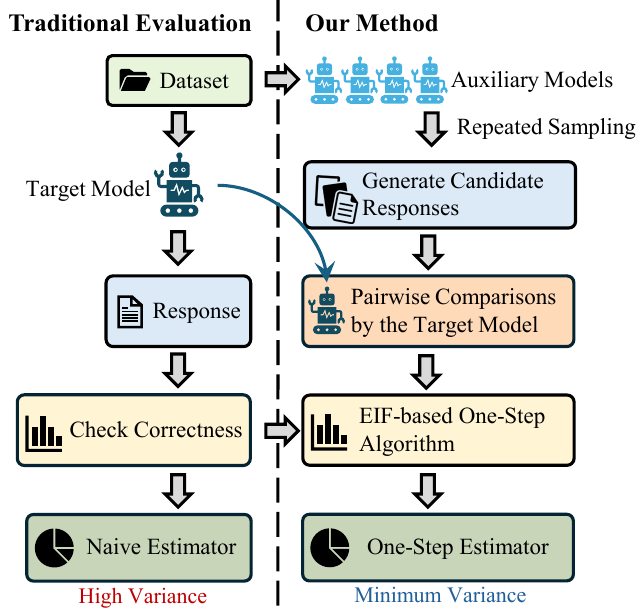}
    \caption{Overview of our Semiparametric Evaluation Framework. We augment standard accuracy evaluation with pairwise comparison signals and construct an EIF-based one-step estimator that achieves the semiparametric efficiency bound, yielding strict variance reduction.}
    \label{fig:methodology diagram}
    \vspace{-0.5cm}
\end{figure}

Mathematical reasoning has emerged as a critical benchmark for assessing the genuine cognition of LLMs \citep{yan2025survey, ahn2024large}. This ability is typically quantified through accuracy-based performance, irrespective of the specific metrics employed.
However, despite its importance for high-stakes deployment, current evaluations suffer from a reproducibility crisis, characterized by substantial discrepancies in reported accuracy metrics across different evaluation platforms—even within the domain of mathematical reasoning, where objective ground truths should guarantee consistency.
For instance, regarding the DeepSeek-R1-Distill-Llama-70B model, \citet{guo2025deepseek} reports a GPQA Diamond pass@1 accuracy of 65.2\%, whereas independent evaluation on \href{https://artificialanalysis.ai/evaluations/gpqa-diamond}{Artificial Analysis} reports 40.2\%, and for \href{https://epoch.ai/benchmarks/gpqa-diamond}{Epoch AI} reports 55.7\%, and our own replication yields 59.09\%. 

This variation stems from two primary factors. First, LLMs exhibit high intrinsic stochasticity, especially on challenging math problems, leading to instability and significant output variance \citep{sun2025evaluation}. 
Second, mathematical benchmarks are severely size-constrained. For example, AIME 2025 contains only 30 problems, implying that a single correct or incorrect response shifts the reported accuracy by more than three percentage points. In such small-sample regimes, stochastic variation dominates signal, rendering naive accuracy estimates noisy and model rankings brittle.

To formalize this challenge, we frame the accuracy measurement as an unknown population parameter to be estimated from noisy and limited observations. From a statistical perspective, estimator precision is fundamentally governed by the amount of information extracted from the available data. Because expanding benchmark sizes is often infeasible, particularly for expert-curated mathematical datasets, improving evaluation reliability requires extracting more information per sample, rather than collecting additional samples. This perspective motivates a central question: 
\emph{Where can we obtain additional information beyond the target model’s  final answers, and how can this information be leveraged to improve the statistical efficiency of accuracy estimation?}

A key empirical observation motivates our approach. On difficult mathematical problems, an LLM may fail to produce a correct final answer, yet still exhibit a reliable ability to compare two candidate solutions and identify which one is better, known as the \emph{generation-verification gap} \citep{christiano2017deep, stiennon2020learning, ouyang2022traininglanguagemodelsfollow, rafailov2023direct}. In other words, while absolute correctness is noisy and sparse, relative judgments about solution quality are often substantially more stable.

We leverage this observation to design a statistically efficient evaluation framework that combines standard, possibly noisy, labeled outcomes with pairwise comparison signals. Specifically, we obtain auxiliary information by having models judge paired reasoning chains generated by auxiliary models, and treat these comparison signals as additional side information to improve the efficiency of the accuracy estimation. The procedure is summarized in Fig.~\ref{fig:methodology diagram}.

Our main contributions are summarized as follows:
\vspace{-0.3cm}
\begin{enumerate}
    \item \textbf{Semiparametric Evaluation Framework.} We formalize LLM accuracy evaluation as a statistical inference problem and propose a novel methodology that leverages pairwise comparison signals derived from the target model's evaluation of auxiliary responses as control variates. Using tools from semiparametric inference, we derive the Efficient Influence Function (EIF) under a semiparametric model where the auxiliary generation mechanism is known, and develop a practical One-Step Estimator with cross-fitting. For small-sample regimes, we further introduce an in-context learning approach that utilizes off-the-shelf LLMs as semantic regressors.   
    \vspace{-0.6cm}
    \item \textbf{Theoretical Guarantees.}    We show that the proposed estimator is asymptotically normal and attains the semiparametric efficiency bound, enabling principled uncertainty quantification for LLM evaluation with the optimal efficiency. The efficiency gain is strictly positive whenever the auxiliary comparison signals provide non-redundant information beyond the original inputs.
        \vspace{-0.5cm}
    \item \textbf{Empirical Validation.} We validate our framework on both simulated data and three real-world benchmarks (GPQA Diamond, AIME 2025, GSM8K), demonstrating consistent variance reduction and superior model evaluation accuracy compared to the naive sample mean estimator.
\end{enumerate}


\subsection{Related Work}

\noindent\textbf{Efficiency and Rigor in LLM Evaluation.}
LLM evaluation has evolved from static benchmarks to dynamic, model-based paradigms \citep{chang2024survey, ni2025survey, lu2023mathvista, zheng2023judging, liu2025re, ibrahim2025multiturnevaluationanthropomorphicbehaviours}, yet most work neglects \emph{statistical efficiency} \citep{boyeau2024autoeval}.
Recent efforts address estimation variance via control variates \citep{pmlr-v267-frick25a, control2024efficient}, Bayesian refinement \citep{hariri2025dontpasskbayesianframework}, and pairwise graphs with uncertainty quantification \citep{mahdavi2025combigraph, chernyshev2025u}. Typically, in small-sample settings (e.g., AIME with 30 problems), Bayesian approaches can be highly sensitive to prior specification, and their performance may vary substantially without a well-justified prior.
Moreover, these lack unified theoretical grounding. We address this gap by treating evaluation as information extraction: mining auxiliary reasoning signals to achieve semiparametric efficiency.

\noindent\textbf{Pairwise Comparisons and Alignment Signals.}
Pairwise comparisons underpin modern alignment methods including RLHF and DPO \citep{christiano2017deep, stiennon2020learning, ouyang2022traininglanguagemodelsfollow, rafailov2023direct}, valued because verification is simpler than generation \citep{cobbe2021gsm8k, lightman2023lets} and such data is cheaply elicited via LLM judges \citep{zheng2023judging, dong2026labelspreferencesbudgetconstrainedlearning}.
Beyond training, pairwise signals enable inference-time consistency checks \citep{debate2024persuasive, selfconsistency2024improve} and robust preference aggregation \citep{mahdavi2025combigraph, chernyshev2025u}.
We argue that 
this data constitutes an underutilized auxiliary source for evaluation: the discriminative signal from verifying reasoning chains is highly correlated with ground-truth correctness, making it an ideal control variate.

\noindent\textbf{Semi-parametric Inference.}
Semiparametric inference provides a general framework for achieving optimal estimation efficiency by leveraging auxiliary information as control variates \citep{robins1994estimation, robins1995semiparametric, tsiatis2006semiparametric,vanderVaart1998asymptotic}. Central to this framework is the Efficient Influence Function (EIF) \citep{kennedy2022semiparametric}, which characterizes the optimal variance achievable by any regular estimator and guides the construction of one-step corrected estimators.

\section{Problem Set-up}
\setlength{\abovedisplayskip}{6pt}
\setlength{\belowdisplayskip}{6pt}
\setlength{\abovedisplayshortskip}{3pt}
\setlength{\belowdisplayshortskip}{3pt}

Consider evaluating a target LLM on mathematical reasoning tasks.
Let $\mathcal T$ denote the space of texts. For each problem instance,  let $X\in\mathcal T$ denote the input prompt, $Y\in\mathcal T$ the model's final response, and $G\in\mathcal T$ the ground-truth answer. We define the model performance through a metric function $\phi: \mathcal{T} \times \mathcal{T} \to \mathbb{R}^+$. Model performance is quantified through a metric function:
\begin{equation*}
\theta \coloneqq \mathbb{E}[\phi(Y, G)].
\end{equation*}

For accuracy-based evaluation, particularly in mathematical reasoning tasks where exact correctness is required, the metric function takes the form $
\phi_{\text{acc}}(y, g) = \mathbbm{1}\{y = g\}$,
where $\mathbbm{1}\{\cdot\}$ is the indicator function. Thus, $\theta$ represents the probability that the model's response exactly matches the ground truth. We use $(x,y,g)$ to denote the observed realizations of $(X,Y,G)$. 

Formally, evaluating the LLM's capability can be cast as a statistical inference problem for the target parameter $\theta$. We assume access to a primary evaluation dataset of $N$ i.i.d. observations $\mathcal{D} = \{(x_i, y_i, g_i)\}_{i=1}^N$. The goal is to estimate $\theta$. The standard approach relies solely on the ground-truth $G$ and the answer $Y$, yielding the \emph{naive estimator}:
\begin{equation}
\label{eq: naive estimator}
    \hat{\theta}^{\text{naive}} = \frac{1}{N} \sum_{i=1}^N \phi(y_i, g_i).
\end{equation}
While unbiased, this estimator can exhibit high variance in small-sample regimes, which are common in mathematical benchmarks.

To achieve more efficient estimation, specifically to construct an estimator with smaller variance than $\hat{\theta}^{\text{naive}}$, we propose to leverage auxiliary information. Our key insight is based on the observation that verifying a solution is often less demanding than generating one \citep{cobbe2021gsm8k, lightman2023lets}, and therefore the discriminative signal from a judge provides valuable auxiliary information. Even when a model fails to produce a correct final answer, its relative judgments over candidate solutions may still be informative about its reasoning competence.
The use of pairwise comparisons also aligns with the increasing adoption of pairwise comparison data in recent LLMs literature \citep{bradley1952rank, zheng2023judging, dubois2024alpacafarm, rafailov2023direct}. 
Formally, beyond the primary evaluation data $\mathcal{D}$, we assume access to auxiliary signals for each problem instance $i \in [N]$ in the form of
\begin{equation*}
    z_{ij} = (w_{1ij}, w_{2ij}, v_{ij}),
\end{equation*}
where index $j$ denotes the $j$-th auxiliary generation. Here, $w_{1ij}, w_{2ij} \in \mathcal{T}$ are two candidate reasoning chains (including final responses) generated by auxiliary LLMs, which may coincide with or differ from the target model, and $v_{ij}$ is a pairwise comparative signal produced by the target model indicating which candidate is preferred. In the next section, we show how these auxiliary signals $Z = (W_1, W_2, V)$ can be systematically incorporated to improve the efficiency of estimating $\theta$.

\section{Method}
\setlength{\abovedisplayskip}{6pt}
\setlength{\belowdisplayskip}{6pt}
\setlength{\abovedisplayshortskip}{3pt}
\setlength{\belowdisplayshortskip}{3pt}

Our primary objective is to improve upon the statistical efficiency of the naive baseline $\hat{\theta}^{\text{naive}}$ by incorporating the rich auxiliary signals $Z$. We formalize this by treating these signals as \emph{control variates}, that is, variables whose distribution is fully known and can be repeatedly sampled,  within the semiparametric framework \citep{deng2013improving, tsiatis2006semiparametric, kennedy2022semiparametric, hines2022demystifying}.

Intuitively, if the auxiliary signals are correlated with the ground truth, we can query the models to extract this additional information, thereby reducing the variance of our inference on $\theta$. Different from the traditional semiparametric theory for observational studies, in LLM evaluation, the auxiliary signals $Z$ are generated purely by the LLMs (including target and auxiliary models). Consequently, their conditional distribution given the input $X$ is specified by the generation mechanism and can be approximated arbitrarily well via repeated Monte Carlo sampling. We leverage this property to construct an estimator with smaller variance.
\subsection{Efficient Influence Function}

To construct an estimator with optimal statistical properties, we derive the Efficient Influence Function (EIF) for our target parameter $\theta$. We consider the observations $(X, Y, G, Z)$, where $Z = (W_1, W_2, V)$ denotes the auxiliary signals.

\begin{dfn}[Semiparametric Model Space for AI Evaluation]
\label{dfn:model_space}
We define the semiparametric model space for AI evaluation as
\begin{equation*}
\resizebox{0.97\linewidth}{!}{$
\mathcal{M}
=
\left\{
\begin{aligned}
P :\;& p(y, g, z, x) = p(y, g \mid z, x)\, p(z \mid x)\, p(x), \\
& p(z \mid x) \text{ is known and fixed by the evaluation protocol}
\end{aligned}
\right\}.
$}
\end{equation*}
Equivalently, the model is nonparametric in $p(y, g \mid z, x)$ and $p(x)$, while $p(z \mid x)$ is treated as known and therefore does not vary over $\mathcal{M}$.
\end{dfn}

\begin{rem}
In our AI evaluation setting, the condition that $p(z \mid x)$ is known is not an additional statistical assumption; it is part of the model specification induced by the experimental design. The auxiliary labels are generated on-demand via auxiliary and target models under a fixed prompting mechanism, so this conditional distribution can be accessed, up to arbitrary precision, through repeated Monte Carlo sampling.
\end{rem}

Based on Definition~\ref{dfn:model_space}, we derive the EIF for the target parameter $\theta$ under this AI evaluation model space.

\begin{prop}[Efficient Influence Function with Known Conditional Distribution]
\label{prop:eif}
Under the semiparametric model space $\mathcal{M}$ in Definition~\ref{dfn:model_space}, the EIF for $\theta$ evaluated at a data point $(X, Y, G, Z)$ is given by:
\begin{equation}
\label{eq:eif}
    \psi(X, Y, G, Z) = \big( (m(X) - \theta) \big) - \big(\tau(X,Z) - \phi(Y,G) \big)
\end{equation}
{where the nuisance functions $\tau$ and $m$ \footnote{Nuisance function is a commonly used term in semiparametric inference. For self-completeness, we introduce its formal definition in Appendix~\ref{app:formal_definition}.} satisfy}
\begin{align}
    &\tau(X,Z) = \mathbb{E}[ \phi(Y, G) \mid Z, X ], \label{eq:full_reg} \\
    &m(X) = \mathbb{E}[ \phi(Y, G) \mid X ] = \int \tau(X,z) \, dP(z \mid X). \label{eq:int_reg}
\end{align}
\end{prop}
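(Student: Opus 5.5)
We prove the claim with the standard pathwise-derivative argument, restricted to submodels that respect the constraint of $\mathcal{M}$.

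\textbf{Tangent space.} Take a regular one-dimensional parametric submodel $P_t \in \mathcal{M}$ through $P$ at $t=0$. Since $p(z\mid x)$ is fixed, the density factorizes as
\[
p_t(y,g,z,x)=p_t(y,g\mid z,x)\,p(z\mid x)\,p_t(x).
\]
Hence the score is $S = S_{YG}(y,g\mid z,x) + S_X(x)$, where $\mathbb{E}[S_{YG}\mid Z,X]=0$ and $\mathbb{E}[S_X]=0$. There is no component in the space $\{s(z,x):\mathbb{E}[s\mid X]=0\}$. Because both remaining factors are nonparametric, the tangent space is the closed direct sum
\[
\mathcal{T}=\mathcal{T}_{YG}\oplus\mathcal{T}_X,
\]
with $\mathcal{T}_{YG}=\{h(y,g,z,x):\mathbb{E}[h\mid Z,X]=0\}$ and $\mathcal{T}_X=\{h(x):\mathbb{E}[h]=0\}$. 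These two spaces are orthogonal in $L_2^0(P)$.

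\textbf{Pathwise derivative.} Write
\[
\theta(P_t)=\int \phi(y,g)\,p_t(y,g\mid z,x)\,p(z\mid x)\,p_t(x)\,d\mu .
\]
Differentiating at $t=0$ gives
\[
\frac{d}{dt}\theta(P_t)\Big|_{t=0}=\mathbb{E}[\phi(Y,G)S_{YG}]+\mathbb{E}[m(X)S_X].
\]

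We then show that $\psi$ from \eqref{eq:eif} represents this derivative, using the decomposition
\[
\psi=\big(\phi(Y,G)-\tau(X,Z)\big)+\big(m(X)-\theta\big).
\]
\begin{itemize}
\item The first piece has conditional mean zero given $(Z,X)$, so it lies in $\mathcal{T}_{YG}$. It is orthogonal to $S_X$, and
\[
\mathbb{E}[(\phi-\tau)S_{YG}]=\mathbb{E}[\phi S_{YG}],
\]
since $\tau$ is a function of $(Z,X)$ and $\mathbb{E}[S_{YG}\mid Z,X]=0$.
\item The second piece lies in $\mathcal{T}_X$ and has mean zero by \eqref{eq:int_reg}. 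It is orthogonal to $S_{YG}$, and
\[
\mathbb{E}[(m-\theta)S_X]=\mathbb{E}[mS_X].
\]
\end{itemize}
Therefore $\mathbb{E}[\psi S]=\frac{d}{dt}\theta(P_t)|_{t=0}$ for every score $S$, so $\psi$ is a gradient.

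\textbf{Efficiency.} Both pieces of $\psi$ lie in $\mathcal{T}$. A gradient that lies in the tangent space is the canonical gradient, because the projection of any gradient onto $\overline{\mathcal{T}}$ is unique. Hence $\psi$ is the EIF.

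\textbf{Main difficulty.} The delicate step is characterizing $\mathcal{T}$ precisely. We must verify two things:
\begin{itemize}
\item Every element of $\mathcal{T}_{YG}\oplus\mathcal{T}_X$ is attained as a score of some submodel in $\mathcal{M}$. For bounded $h$ this follows from the usual construction $p_t = p(1+th)$ applied factor-wise; for general $h$ we take $L_2$ closure.
\item The missing direction $\{s(z,x):\mathbb{E}[s\mid X]=0\}$ is genuinely excluded by knowledge of $p(z\mid x)$.
\end{itemize}
This exclusion is exactly what removes the $\tau(X,Z)-m(X)$ term that the fully nonparametric gradient $\phi-\theta$ would carry. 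Consistently, the resulting variance is $\mathrm{Var}(\phi)-\mathbb{E}[\mathrm{Var}(\tau\mid X)]$, so the efficiency bound is strictly smaller than $\mathrm{Var}(\phi)$ whenever $\tau$ depends on $Z$ given $X$.

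We also assume $\phi$ is bounded, which holds for $\phi_{\text{acc}}$; this justifies interchanging differentiation and integration.
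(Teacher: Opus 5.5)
Your argument is correct, and its structural core matches the paper's. Because $p(z\mid x)$ is fixed, the tangent space is $\mathcal{T}_X\oplus\mathcal{T}_{YG}$ (the paper's $\mathcal{T}_1\oplus\mathcal{T}_2$), with no component in $\{s(z,x):\mathbb{E}[s\mid X]=0\}$.

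The difference is in how the canonical gradient is identified. The paper takes the nonparametric influence function $\phi(Y,G)-\theta$, which remains a gradient in the smaller model $\mathcal{M}$. It projects this onto the two orthogonal pieces, reading off $\phi-\tau$ from the conditional projection and $m-\theta$ from the marginal one. You instead compute the pathwise derivative $\mathbb{E}[\phi S_{YG}]+\mathbb{E}[m S_X]$ along model-respecting paths and check that the candidate $\psi$ represents it. You then conclude from $\psi\in\overline{\mathcal{T}}$ and uniqueness of the canonical gradient.

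The paper's route is constructive: it produces $\psi$ rather than verifying a guess. It is also shorter once the projection characterization of the EIF is taken as known. Yours is self-contained and checks the gradient identity directly along the submodels of $\mathcal{M}$. It is also more careful than the paper in sketching why bounded factor-wise perturbations generate all of $\mathcal{T}_X\oplus\mathcal{T}_{YG}$; the paper only asserts this.

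One small point: you do not need $\phi$ bounded, and the paper's simulation uses the unbounded squared-error metric. Consider the bounded-score paths $p_t(y,g\mid z,x)=p(y,g\mid z,x)(1+t h_1)$ and $p_t(x)=p(x)(1+t h_2)$ that you already use for attainment; they generate the same closed tangent space. Along them, $\theta(P_t)=\theta+t\,(\mathbb{E}[\phi\,h_1]+\mathbb{E}[m\,h_2])+t^2\,\mathbb{E}[\phi\,h_1 h_2]$ is a polynomial in $t$, so no interchange of limits is required. Then $\phi\in L_2(P)$, as in Assumption~\ref{asm:moment}, suffices for $\psi$ to lie in the closed tangent space.
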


\begin{remark}

The structure of the EIF directly reflects the orthogonal decomposition of the tangent space (see Appendix~\ref{app:proof_eif}):
\begin{itemize}
\vspace{-0.45cm}
    \item \textbf{Outcome regression ($\tau$):} The function $\tau(x, z)$ represents the expected correctness of the target LLM given the problem $x$ and observed auxiliary signals $z = (w_1, w_2, v)$. {It learns to predict whether the target model answers correctly by leveraging auxiliary data $z$.} This enables evaluation even when ground-truth labels $Y$ are unavailable, as the auxiliary signals provide informative proxies for the target model's performance.
    \vspace{-0.45cm}
    \item \textbf{Integrated regression ($m$):} The function $m(x)$ marginalizes over all possible auxiliary LLM generations. A key advantage of LLM evaluation is that we can query auxiliary models arbitrarily many times on the same problem $x$, allowing us to approximate $m(x)$ arbitrarily well via repeated Monte Carlo sampling as shown in Equation \eqref{eq: integrated regression}.
\end{itemize}

By construction, $m(X)$ is the $Z\mid X$-marginalization of $\tau(X,Z)$; hence
$m(X)=\mathbb{E}[\tau(X,Z)\mid X]$ and
$\mathbb{E}[m(X)-\tau(X,Z)]=0$.
We also note that the two terms $(\phi(Y,G) - \tau(X,Z))$ and $(m(X) - \theta)$ in Equation~\eqref{eq:eif} are orthogonal (the detailed derivation is deferred to Appendix~\ref{app:orthogonality}). 
This orthogonality, combined with informative auxiliary signals, yields a strictly lower asymptotic variance than the naive estimator.

\end{remark}

\subsection{One-Step Estimator}

To translate the theoretical EIF into a practical algorithm, we propose an \emph{One-Step Estimator} Algorithm. 
Specifically, we utilize Cross-Fitting \citep{chernozhukov2018double} where the nuisance function $\hat{\tau}$ evaluated at a data point $X_i$ is trained on a partition independent of $i$, thereby decoupling the nuisance estimation from the inference of $\theta$. We then calibrate the naive estimator by fitting two nuisance components based on the form of EIF in \eqref{eq:eif}. 


The complete procedure is detailed in Algorithm~\ref{alg:ai evaluation estimator}. We approximate the integration in \eqref{eq:int_reg} via Monte Carlo sampling, utilizing the known generator $p(z \mid x)$.
\begin{algorithm}[!ht]
\caption{EIF based One-Step Algorithm}
\label{alg:ai evaluation estimator}
\begin{algorithmic}[1]
\small

\STATE \textbf{Input:} Dataset $\mathcal{D}$ with $M+1$ auxiliary samples per instance:
\begin{equation*}
\mathcal{D} = \Big\{ x_i, y_i, g_i, \{(w_{1ij}, w_{2ij}, v_{ij})\}_{j=1}^{M+1} \Big\}_{i=1}^N.
\end{equation*}

\STATE Randomly partition indices $\{1,\dots,N\}$ into $K$ disjoint folds $I_1,\dots,I_K$.

\FOR{$k = 1,\dots,K$}

    \STATE Let $I_k^c$ denote the training set (all indices except $I_k$).

    \STATE \textbf{(Outcome regression)} Use data in $I_k^c$ to fit $\hat{\tau}^{(-k)}(x,z)$ as an estimate of $\tau(x, z) = \mathbb{E}[\phi(Y,G) \mid X=x, Z=z]$, where $z = (w_1, w_2, v)$.
    
    \STATE \textbf{(Integrated regression)} For any $x$, compute using the last $M$ samples:
    \begin{equation}
    \label{eq: integrated regression}
        \hat{m}^{(-k)}(x) = \frac{1}{M} \sum_{j=2}^{M+1} \hat{\tau}^{(-k)}(w_{1j}, w_{2j}, v_{j}, x).
    \end{equation}

    \FOR{$i \in I_k$}
        \STATE Compute influence score using the first auxiliary sample ($j=1$):
        \begin{equation*}
        \resizebox{\linewidth}{!}{$
            \hat{\psi}^{(-k)}_i = \hat{m}^{(-k)}(X_i) + \phi(Y_i,G_i) - \hat{\tau}^{(-k)}(w_{1i, 1}, w_{2i, 1}, v_{i, 1}, X_i).
        $}
        \end{equation*}
    \ENDFOR

\ENDFOR

\STATE \textbf{Output:}
\begin{equation}
\label{eq: one-step estimator}
    \hat{\theta}^{\text{1-step}} = \frac{1}{N} \sum_{k=1}^K \sum_{i\in I_k} \hat{\psi}^{(-k)}_i.
\end{equation}
\end{algorithmic}
\end{algorithm}

\begin{remark}[Semantic Regression for Small Samples]
\label{rem:icl_implementation}
In small-sample regimes (e.g., AIME25 with $N=30$), training a high-dimensional regression model $\hat{\tau}(x, z)$ is infeasible. Instead, we approximate $\tau(x, z)$ by leveraging the reasoning capabilities of off-the-shelf LLMs (e.g., Gemini-3-Flash) via in-context learning. We treat the LLM as a fixed ``semantic regressor'' that predicts the probability of correctness based on the problem context. This approach offers two key advantages: (1) \textbf{no training cost}: since the LLM's predictive capability derives entirely from pre-trained weights rather than gradient updates on the evaluation set, we avoid the computational burden of model fitting altogether. (2) \textbf{statistical independence}: because the semantic regressor is fixed \emph{a priori} and independent of the evaluation samples, cross-fitting becomes unnecessary, further simplifying the estimation pipeline. Empirically, this strategy performs remarkably well even with extremely small sample sizes, as the pre-trained LLM effectively transfers its broad reasoning capabilities to the nuisance estimation task. The adapted procedure is detailed in Algorithm~\ref{alg: app icl} in Appendix~\ref{app: real data algorithm}.
\end{remark}

\section{Theoretical Analysis}
\setlength{\abovedisplayskip}{6pt}
\setlength{\belowdisplayskip}{6pt}
\setlength{\abovedisplayshortskip}{3pt}
\setlength{\belowdisplayshortskip}{3pt}

We establish two main theoretical results for our one-step estimator. First, we prove asymptotic normality and show that it achieves the semiparametric efficiency bound under mild regularity conditions (Theorem~\ref{thm:asymptotic_normality}). Second, we quantify the variance reduction relative to the naive sample mean and establish conditions under which strict efficiency gains are guaranteed (Corollary~\ref{cor:strict_gain}).
\subsection{Asymptotic Normality}

\begin{asm}[Moment Condition]
\label{asm:moment}
The evaluation metric $\phi(Y, G)$ has finite second moments, i.e., 
$\mathbb{E}[\phi(Y, G)^2] < \infty$.
\end{asm}

\begin{asm}[Consistency]
\label{asm:consistency}
$\hat{\tau}$ converges to $\tau$ in $L_2$ norm:
\begin{equation*}
    \| \hat{\tau} - \tau \|_{P_{(X,Z)}, 2} = \sqrt{\mathbb{E}[(\hat{\tau}(X, Z) - \tau(X,Z))^2]} = o_p(1),
\end{equation*}
where $P_{(X,Z)}$ denotes the marginal distribution of $(X,Z)$ under $P$, and $\| f \|_{P, 2} = \sqrt{\mathbb{E}_P[f^2]}$ denotes the $L_2$ norm with respect to the probability measure $P$.
\end{asm}
\begin{asm}[Monte Carlo Approximation]
\label{asm:mc_error}
$M \to \infty$ as $N \to \infty$, ensuring the approximation error is negligible relative to $N^{-1/2}$.
\end{asm}
\begin{rem}
Assumption~\ref{asm:moment} ensures finite variance for semiparametric inference. It is trivially satisfied for accuracy metrics ($\phi \in \{0, 1\}$) and holds broadly for any bounded evaluation metric. Assumption~\ref{asm:consistency} guarantees that the nuisance estimator converges to the true conditional expectation. Notably, this requirement is substantially weaker than that in standard semiparametric inference \citep{chernozhukov2018double}, which demands convergence at rate $N^{-1/4}$; thus, even flexible machine learning methods (e.g., neural networks, random forests) with slow convergence rates can be employed. Assumption~\ref{asm:mc_error} controls the Monte Carlo approximation error in computing $m(X)$. It is easily satisfied since $P(Z \mid X)$ is known and we can generate arbitrarily many auxiliary samples at negligible cost.
\end{rem}
\begin{thm}[Asymptotic Normality]
\label{thm:asymptotic_normality}
Under Assumptions \ref{asm:moment}--\ref{asm:mc_error}, $\hat{\theta}_{\mathrm{1\text{-}step}}$ is asymptotically normal:
\begin{equation}
    \sqrt{N}(\hat{\theta}_{\mathrm{1\text{-}step}} - \theta) \xrightarrow{d} \mathcal{N}(0, \sigma^2_{\text{eff}}),
\end{equation}
where $\sigma^2_{\text{eff}} = \Var(\psi(X, Y, G, Z))$. Furthermore, $\hat{\theta}_{\mathrm{1\text{-}step}}$ achieves the semiparametric efficiency bound in the sense that no other regular estimators have a smaller variance than $\hat{\theta}_{\mathrm{1\text{-}step}}$.
\end{thm}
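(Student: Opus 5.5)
The plan is to decompose the error of the cross-fitted estimator into an oracle term, driven by the true influence function $\psi$ of Proposition~\ref{prop:eif}, plus remainder terms, and then show every remainder is $o_p(N^{-1/2})$. Write $\bar m^{(-k)}(x)=\int \hat\tau^{(-k)}(x,z)\,dP(z\mid x)$ for the exact integrated version of the fitted outcome regression. For $i\in I_k$ the score decomposes as
\begin{align*}
\hat\psi^{(-k)}_i-\theta-\psi_i
&= \bigl[\bar m^{(-k)}(X_i)-m(X_i)\bigr]-\bigl[\hat\tau^{(-k)}(X_i,Z_{i1})-\tau(X_i,Z_{i1})\bigr]\\
&\quad+\bigl[\hat m^{(-k)}(X_i)-\bar m^{(-k)}(X_i)\bigr].
\end{align*}
Write $\Delta^{(k)}(x,z)=\hat\tau^{(-k)}(x,z)-\tau(x,z)$. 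The first bracketed difference equals $\mathbb{E}[\Delta^{(k)}(X,Z)\mid X]-\Delta^{(k)}(X,Z)$ evaluated at observation $i$. Summing the oracle term gives $\sqrt N(\hat\theta_{\mathrm{1\text{-}step}}-\theta)=N^{-1/2}\sum_i\psi_i+R_1+R_2$. Here $\psi$ has mean zero, since $\mathbb{E}[m(X)]=\theta$ and $\mathbb{E}[\phi-\tau\mid X,Z]=0$, and it has finite variance by Assumption~\ref{asm:moment}. The Lindeberg--L\'evy CLT then gives convergence to $\mathcal N(0,\sigma^2_{\text{eff}})$, and Slutsky finishes the argument once $R_1,R_2=o_p(1)$.

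\textbf{Key step: no second-order bias.} Because $p(z\mid x)$ is known, $\bar m^{(-k)}$ integrates $\hat\tau^{(-k)}$ against the \emph{true} conditional law. So conditional on the training fold $I_k^c$, the summand $\mathbb{E}[\Delta^{(k)}\mid X_i]-\Delta^{(k)}(X_i,Z_{i1})$ has mean exactly zero; there is no product-of-errors bias term. This is why Assumption~\ref{asm:consistency} needs only $o_p(1)$ and not the $N^{-1/4}$ rate. Cross-fitting makes these summands i.i.d. given $I_k^c$, so the conditional variance of the fold-$k$ contribution to $R_1$ is at most $\mathbb{E}[(\Delta^{(k)})^2\mid I_k^c]=\|\hat\tau^{(-k)}-\tau\|_{P,2}^2=o_p(1)$, using that a conditional variance is bounded by a second moment. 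Conditional Chebyshev, followed by a dominated-convergence argument on conditional probabilities, gives $R_1=o_p(1)$ for each of the finitely many folds.

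\textbf{Monte Carlo remainder.} For $R_2$, the draws $Z_{i2},\dots,Z_{i,M+1}$ are i.i.d. from $p(\cdot\mid X_i)$ and independent of $Z_{i1}$ and $(Y_i,G_i)$ given $X_i$, so $\hat m^{(-k)}(X_i)-\bar m^{(-k)}(X_i)$ is again conditionally mean zero. Its conditional variance is at most $M^{-1}\mathbb{E}[\hat\tau^{(-k)}(X,Z)^2\mid I_k^c]$, which is $O_p(1/M)$ because $\|\hat\tau\|_{P,2}\le\|\tau\|_{P,2}+o_p(1)$ and $\|\tau\|_{P,2}$ is finite by Assumption~\ref{asm:moment}. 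Averaging over $i$ with independence across $i$ gives $R_2=O_p(M^{-1/2})=o_p(1)$ by Assumption~\ref{asm:mc_error}. If one does not center at $\bar m$ (for instance when $\hat\tau$ is not square-integrable), a crude bound would instead need $M$ growing fast enough, which is how I read the assumption's phrase ``negligible relative to $N^{-1/2}$''.

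\textbf{Efficiency.} By Proposition~\ref{prop:eif}, $\psi$ is the efficient influence function in $\mathcal M$. The convolution theorem (van der Vaart, Thm.~25.20) says every regular estimator has a limit law equal to $\mathcal N(0,\operatorname{Var}\psi)$ convolved with an independent noise term. Since our estimator is asymptotically linear with influence function exactly $\psi$, it is regular and attains the bound.

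I expect the main obstacle to be making the conditioning argument for $R_1$ rigorous. The randomness of $\hat\tau^{(-k)}$ must be handled by conditioning on $I_k^c$, and convergence in probability of the conditional $L_2$ norm must be translated into unconditional $o_p(1)$ convergence.
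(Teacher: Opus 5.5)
Your proposal is correct and follows essentially the same route as the paper's proof. Both arguments write the estimator as an oracle CLT term plus a cross-fitted remainder, use the known $p(z\mid x)$ to make that remainder exactly mean zero given $I_k^c$, and then control it with conditional Chebyshev followed by a bounded-convergence step. The differences are only organizational: you isolate the Monte Carlo error as its own conditionally centered term of order $O_p(M^{-1/2})$ rather than folding it into $\|\hat m - m\|_{P,2}$, and you spell out the convolution-theorem argument for the efficiency claim, which the paper leaves implicit in Proposition~\ref{prop:eif}.
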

\begin{rem}
Theorem~\ref{thm:asymptotic_normality} establishes that our estimator is $\sqrt{N}$-consistent and asymptotically normal. {This statistical rigor is vital for LLM evaluation, where small benchmarks (e.g., AIME) often yield unstable rankings; valid confidence intervals allow us to distinguish genuine improvements from stochastic noise.} 
Furthermore, our framework is highly general: the auxiliary variable $Z$ can encompass any signal (e.g., Likert scores, multi-way rankings) beyond pairwise comparisons. 
Crucially, our framework facilitates valid evaluation regardless of the target model's large output variability. Even in high-noise regimes, our approach leverages auxiliary correlations to ``calibrate'' the estimate, ensuring reduced variance where naive estimators fail.
\end{rem}
\subsection{Variance Reduction}
\label{subsec:variance reduction}

We quantify the efficiency gain compared to the naive estimator \eqref{eq: naive estimator}. Let $\sigma^2_{\text{naive}} = \Var(\phi(Y, G))$ and $\sigma^2_{\text{eff}}$ denote asymptotic variances.
\begin{cor}[Strict Efficiency Gain]
\label{cor:strict_gain}
Assume that the auxiliary variable $Z$ is \textbf{not conditionally independent} of the target metric $\phi(Y, G)$ given $X$, i.e., $\tau(X,Z) \neq m(X)$ holds with positive probability. Then, the one-step estimator strictly reduces the asymptotic variance:
\begin{equation*}
    \sigma^2_{\text{eff}} < \sigma^2_{\text{naive}}.
\end{equation*}
\end{cor}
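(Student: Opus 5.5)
Since $\sigma^2_{\text{eff}} = \Var(\psi(X,Y,G,Z))$ by Theorem~\ref{thm:asymptotic_normality}, the plan is to write both variances through the same three-term orthogonal decomposition and compare them term by term. Center the metric and split
\begin{equation*}
\phi(Y,G)-\theta = \big(m(X)-\theta\big) + \big(\tau(X,Z)-m(X)\big) + \big(\phi(Y,G)-\tau(X,Z)\big).
\end{equation*}
The EIF in \eqref{eq:eif} is the same sum with the middle term deleted:
\begin{equation*}
\psi = \big(m(X)-\theta\big) + \big(\phi(Y,G)-\tau(X,Z)\big).
\end{equation*}
Assumption~\ref{asm:moment} guarantees that all three pieces are in $L_2$, so the variances and cross moments below are finite.

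Next I show that the three terms are mutually uncorrelated, using the tower property and the definitions \eqref{eq:full_reg}--\eqref{eq:int_reg}. Write $A=m(X)-\theta$, $B=\tau(X,Z)-m(X)$ and $C=\phi(Y,G)-\tau(X,Z)$.
\begin{itemize}
\item $\mathbb{E}[C\mid X,Z]=0$, and $A$ and $B$ are functions of $(X,Z)$. Hence $\mathbb{E}[AC]=\mathbb{E}[BC]=0$.
\item $\mathbb{E}[B\mid X]=\mathbb{E}[\tau(X,Z)\mid X]-m(X)=0$, because $m(X)=\int\tau(X,z)\,dP(z\mid X)$. Since $A$ is $X$-measurable, $\mathbb{E}[AB]=0$.
\item All three terms have mean zero.
\end{itemize}
The first two of these facts are the orthogonality already noted in the remark after Proposition~\ref{prop:eif} (Appendix~\ref{app:orthogonality}).

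These facts give the Pythagorean identities
\begin{equation*}
\sigma^2_{\text{naive}} = \Var(A)+\Var(B)+\Var(C), \qquad \sigma^2_{\text{eff}} = \Var(A)+\Var(C),
\end{equation*}
so that
\begin{equation*}
\sigma^2_{\text{naive}}-\sigma^2_{\text{eff}} = \mathbb{E}\big[(\tau(X,Z)-m(X))^2\big].
\end{equation*}
The gap is exactly the mean squared difference between the outcome regression and the integrated regression.

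Finally, the hypothesis says $\tau(X,Z)\neq m(X)$ on a set of positive probability. A nonnegative random variable that is strictly positive on a set of positive probability has strictly positive expectation, so the gap is strictly positive and $\sigma^2_{\text{eff}}<\sigma^2_{\text{naive}}$.

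There is no real obstacle in this argument. The only point needing care is the measure-theoretic one: conditional expectations are defined only almost surely, so the hypothesis must be read as inequality on a set of positive $P_{(X,Z)}$-measure, not pointwise. I would also note that conditional independence of $Z$ and $\phi(Y,G)$ given $X$ implies $\tau=m$ almost surely, which is why the hypothesis is stated in the form $\tau\neq m$.
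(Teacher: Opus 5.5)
Your proof is correct and follows essentially the same route as the paper. The paper gets the identity $\sigma^2_{\text{naive}}-\sigma^2_{\text{eff}}=\mathbb{E}[(\tau(X,Z)-m(X))^2]$ by applying the law of total variance twice, which is the same computation as your orthogonal three-term decomposition, and it then proves strictness by contradiction where you argue positivity directly.
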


\begin{rem}
\label{rem: strict_gain intuition}
The condition $\tau(X,Z) \neq m(X)$ 
assumes that auxiliary labels contain ``novel'' information, that is, the generation-verification gap is nonzero. If $Z$ were independent of the outcome given $X$, knowing $Z$ does not contribute to the estimation and our estimator reduces to the naive estimator. In practice, since $Z$ is also partially obtained by the target model, this independence is naturally violated, ensuring efficiency gain.
\end{rem}

\section{Experiments}
This section presents our experimental evaluation. In Section~\ref{sec:simulation}, we conduct a controlled simulation study where the ground truth of model capabilities is known, enabling us to validate our theoretical findings and systematically investigate the relationship between auxiliary information quality and estimation efficiency. In Section~\ref{sec:real_exp}, we apply our framework to real-world mathematical reasoning benchmarks, including GPQA~\citep{rein2024gpqa}, AIME 2025, and GSM8K~\citep{cobbe2021gsm8k}, to demonstrate its practical effectiveness.\footnote{Code available at: \href{https://github.com/zihandong02/AI_evaluation}{https://github.com/zihandong02/AI\_evaluation}}
\subsection{Simulation Study}
\label{sec:simulation}
\setlength{\abovedisplayskip}{6pt}
\setlength{\belowdisplayskip}{6pt}
\setlength{\abovedisplayshortskip}{3pt}
\setlength{\belowdisplayshortskip}{3pt}

We conduct a controlled simulation study to validate our theoretical results and evaluate the practical performance of our proposed estimator. Specifically, we consider $L=3$ target models, $N=1000$ evaluation samples, and $M=500$ Monte Carlo samples per instance. For each sample $i \in [N]$ and model $l \in [L]$, we observe the input $X_i$, model output $Y_{li}$, ground truth $G_i$, and auxiliary information $\{(W_{1,lij}, W_{2,lij}, V_{lij})\}_{j=1}^{M+1}$ consisting of $M+1$ i.i.d.\ draws of two auxiliary reasoning responses and a preference label. Our goal is to estimate the expected loss $\theta_l = \E[\phi(Y_{li}, G_i)]$ for each model and produce an accurate ranking of model performance. By controlling the underlying parameters, we systematically study how auxiliary information quality affects estimation efficiency and ranking accuracy, demonstrating that our one-step estimator yields more reliable model comparisons than the naive baseline under constrained sample sizes.
\vspace{-0.25cm}
\subsubsection{Data Generation}
\noindent\textbf{Data Generative Model.}
For sample $i=1, \dots, N$, the input $X_i$ is generated from a standard Gaussian distribution $X_i \sim \mathcal{N}(0, 1)$, and the ground truth $G_i$ are set to $G_i = X_i.$
The output $Y_{l i}$ of model $l$ is generated linearly with a model-specific signal level $\sigma_l$: $Y_{l i} = X_i + \epsilon_{l i},$ where the model-specific signal $\epsilon_{l i} \sim \mathcal{N}(0, \sigma_l^2)$ is independent of $X_i$, representing the intrinsic information of the model output.

We evaluate the performance using the squared error metric $\phi(y, g) = (y-g)^2$. The target parameter $\theta_l$ represents the Mean Squared Error (MSE) of model $l$:
$\theta_l = \E[\phi(Y_{l i}, G_i)]$.

\noindent\textbf{Auxiliary Information.}
For each auxiliary draw $j \in [M+1]$, we simulate two auxiliary responses $W_{1,lij}, W_{2,lij}$ and a preference label $V_{lij}$. The noise terms $\eta_{s,ij} \sim \mathcal{N}(0, \sigma_\eta^2)$ are i.i.d.\ for $s\in[2]$, $i\in[N]$, $j\in[M+1]$, and the auxiliary responses are generated by $W_{s,lij} = X_i + \rho_s \epsilon_{li} + \eta_{s,ij}$
where the correlation coefficients $\rho_1, \rho_2$ control the auxiliary information quality for each response.
Crucially, the auxiliary responses are dependent through the output information $\epsilon_{l i}$, reflecting the correlation of the model's internal state.
The preference label is generated as
$V_{l ij} = \mathbbm{1}\qty{ |W_{1,l ij} - Y_{l i}| \le |W_{2,l ij} - Y_{l i}| }$.

The key design insight is that the shared latent component $\epsilon_{li}$ represents stochastic reasoning variability that influences both the final output and the auxiliary responses. This dependence structure ensures that the auxiliary signals are informative about the target loss, enabling substantial variance reduction when incorporated as control variates. As a result, the proposed estimator can achieve accurate estimation even with noisy and limited samples.

\noindent\textbf{Efficient Influence Function.}
According to equation (\ref{eq:eif}), we obtain the EIF in this setting:
\begin{equation*}
\resizebox{0.9\hsize}{!}{$
    \psi(X, Y, G, Z) = (Y-G)^2 - (1 - \kappa\rho)\sigma_l^2 - \kappa^2 (W - X)^2\,,
$}
\end{equation*}
where $\kappa = {\rho \sigma_l^2}/({\rho^2 \sigma_l^2 + \sigma_\eta^2})$ (see Appendix~\ref{app:simulation_derivation} for the complete derivation).

\subsubsection{Experimental Setup}

While point estimation accuracy is a standard metric, we shift our primary focus to ranking accuracy, specifically measured by Exact Match Accuracy and Kendall’s Tau. This shift is motivated by the fact that in practical model evaluation, the primary goal is often reliable model selection rather than precise scoring.
By leveraging the conditional dependence between the auxiliary information $W$ and target output $Y$, our one-step estimator converts variance reduction into  the concept of sample efficiency, 
allowing the true model hierarchy to emerge even with limited samples where naive baselines fail to distinguish between candidates.

\noindent\textbf{Experimental Procedure.}
To evaluate ranking performance, we conduct $R = 100$ independent trials where datasets are generated for models $l \in \{1, 2, 3\}$. In each trial, we compute both the naive and one-step estimates, $\{\hat{\theta}_l^{\text{naive}}, \hat{\theta}_l^{\text{1-step}}\}$, and derive the estimated ranking $\hat{\pi}$ by sorting these values (i.e., $\hat{\pi} = \operatorname{argsort}(\hat{\theta}_1, \hat{\theta}_2, \hat{\theta}_3)$). Since the ground truth parameters are set such that $\theta_1 < \theta_2 < \theta_3$, the true ranking is $\pi^* = (1, 2, 3)$. We measure the alignment between $\hat{\pi}$ and $\pi^*$ using two metrics: (1) \emph{Exact Match Accuracy}, $\text{Acc}_{\text{exact}} = \frac{1}{R}\sum_{r=1}^R \mathbbm{1}\{\hat{\pi}^{(r)} = \pi^*\}$, which captures the probability of recovering the perfect ordering; and (2) \emph{Kendall's Tau} ($\tau$), which quantifies the rank correlation based on the number of concordant ($n_c$) and discordant ($n_d$) pairs.



\subsubsection{Main Results and Analysis}
We consider evaluating $L=3$ models with distinct error levels, indexed by $l \in \{1, 2, 3\}$. For the auxiliary information, we set correlation coefficients $\rho_1 = 0.8$ and $\rho_2 = 0.6$ respectively, and the auxiliary noise term $\sigma_\eta = 0.6$. We set the number of folds in cross-fitting to $K=5$.

\noindent\textbf{Model-specific Signal.}
We first conduct a sensitivity analysis on the ranking accuracy relative to the model-specific signal $\sigma_l^2$. We maintain a constant variance gap of $0.05$ between the three models while incrementally shifting the base variance, testing whether the estimators can still resolve the true model hierarchy when the output noise becomes increasingly dominant. 
\begin{figure}[htbp]
    \centering
    \includegraphics[width=1.0\linewidth]{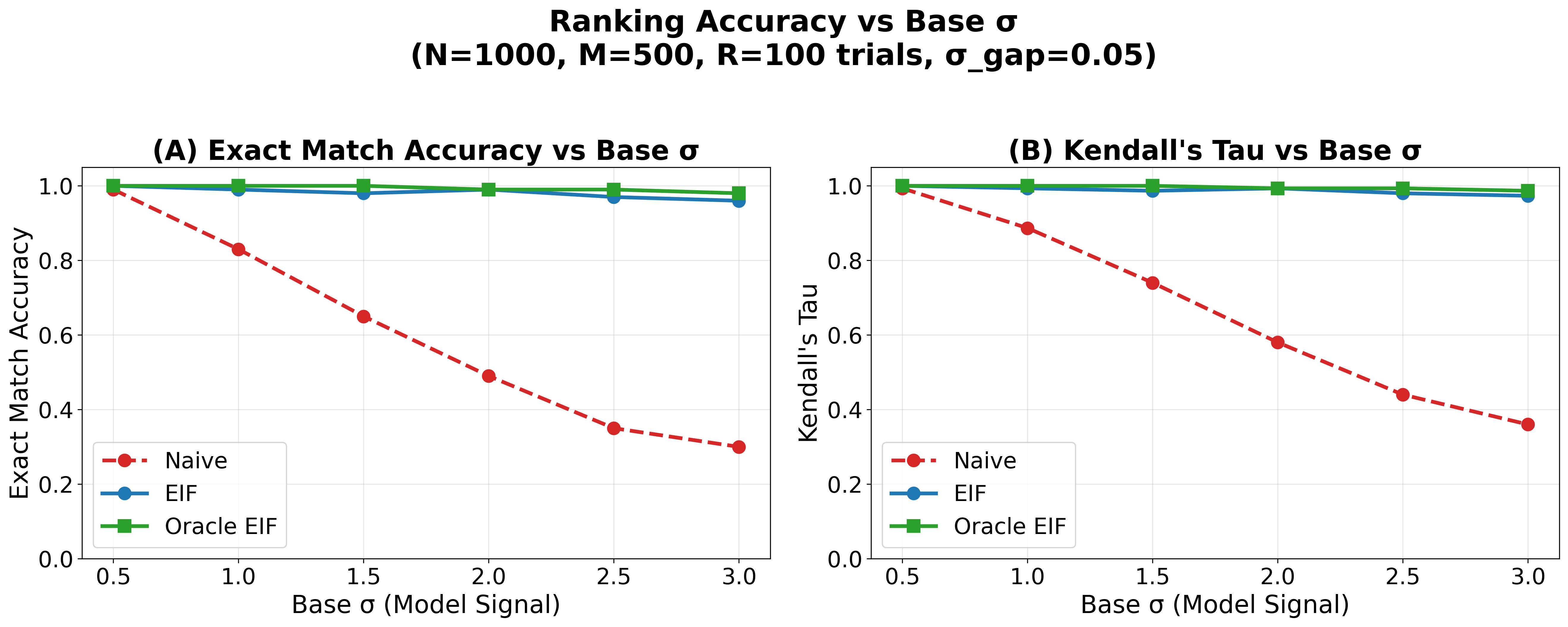}
    \caption{Ranking accuracy vs. model-specific signal} 
    \label{fig: ranking for base sigma}
\end{figure}
From Figure~\ref{fig: ranking for base sigma}, we observe that an increase in the output variance $\sigma_l^2$ leads to a sharp decline in the ranking accuracy of the Naive estimator. In contrast, the proposed one-step estimator remains comparatively stable and preserves a higher level of ranking accuracy across a wide range of noise levels with the same number of samples. Notably, the performance of our EIF-based estimator closely tracks the ``\textbf{Oracle}" curve, which utilizes the theoretical $m$, despite using only simulated approximations for $m$. This high degree of overlap validates the precision of our EIF fitting and confirms that our estimator effectively captures the underlying model characteristics even in noisy regimes. Moreover, the performance gap between the two methods widens as noise becomes more dominant, highlighting the robustness of our approach and its sample efficiency in handling increasingly noisy and complex settings. We defer to Appendix~\ref{app: ranking vs signal} a detailed discussion of 
why ranking accuracy decreases as $\sigma_l^2$ increases.

\noindent\textbf{Auxiliary Noise.} 
We also investigate how the information density of auxiliary variables influences estimation gains by varying the intensity of auxiliary noise. 
Even as the predictive power of auxiliary responses diminishes, the one-step estimator consistently extracts sufficient signal to outperform the naive baseline, demonstrating the robustness of the EIF framework to low-fidelity side information. More detailed sensitivity analysis, including the corresponding experimental figures and specific parameter configurations, is deferred to Appendix \ref{app:auxiliary noise}.


\subsection{Real-World Experiments}
\label{sec:real_exp}
\subsubsection{Datasets and Experimental Settings}

\noindent\textbf{Dataset.}
We evaluate our method on three commonly used benchmarks that span different reasoning domains: \textbf{GPQA Diamond} \citep{rein2024gpqa}, a graduate-level science benchmark containing 198 multiple-choice questions across physics, chemistry, and biology designed to be challenging even for domain experts; \textbf{AIME 2025}, which consists of 30 problems from the American Invitational Mathematics Examination requiring advanced mathematical reasoning; and \textbf{GSM8K} \citep{cobbe2021gsm8k}, a dataset of 1319 grade school math word problems requiring multi-step arithmetic reasoning.

\noindent\textbf{Auxiliary Model Configurations.}
To investigate the impact of auxiliary model selection on estimation quality, we consider two configurations for generating auxiliary labels $(W_1, W_2)$: \textbf{Config 1 (Homogeneous)}, where both auxiliary responses are generated by GPT-5 mini; and \textbf{Config 2 (Heterogeneous)}, where $W_1$ is generated by GPT-5 mini while $W_2$ is generated by DeepSeek-V3.2.

\noindent\textbf{Target Models.}
Our evaluation covers 5 open-source LLMs and 5 closed-source LLMs representing different tiers of reasoning ability. These are classified into two categories: \textit{general-purpose instruction-tuned LLMs} and \textit{reasoning-specialized LLMs} (i.e., models explicitly trained with reinforcement learning or distillation for extended chain-of-thought reasoning). Detailed specifications for each target model are provided in Appendix \ref{appdix:subsec:target llms}.

\noindent\textbf{Data Construction.}
For each benchmark instance with input $X$ and ground-truth answer $G$, we construct the observation tuple $(X, Y, G, W_1, W_2, V)$ through a sequential process. First, we generate auxiliary responses $W_1$ and $W_2$ from the designated auxiliary models. Next, we obtain the target model's answer $Y$ via direct generation. Finally, we elicit the preference judgment $V \in \{0, 1\}$ by prompting the target model to compare $W_1$ and $W_2$, where $V = 1$ indicates a preference for $W_1$ and $V = 0$ for $W_2$.

We employ the standard indicator function to define the accuracy metric: $\phi_{\text{acc}}(y, g) = \mathbbm{1}\{y = g\}$. The target parameter of interest, $\theta = \mathbb{E}[\phi_{\text{acc}}(Y, G)]$, thus represents the expected accuracy of the model across the benchmark distribution.

\subsubsection{Experiment Results and Analysis}
\label{subsubsec:experiment results}

\begin{table*}[!t]
\centering
\small
\begin{tabular}{l cc cc cc}
\toprule
& & & \multicolumn{2}{c}{\textbf{Config 1}} & \multicolumn{2}{c}{\textbf{Config 2}} \\
\cmidrule(lr){4-5} \cmidrule(lr){6-7}
\textbf{Model} & \textbf{GT\%} & \textbf{Naive\%} & \textbf{One-step\%} & \textbf{Improv.} & \textbf{One-step\%} & \textbf{Improv.} \\
\midrule
\raisebox{-0.2\height}{\includegraphics[height=0.9em]{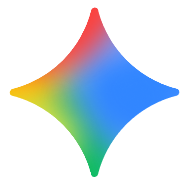}}\hspace{3pt}Gemini-3-Flash-Preview
& \cellcolor{gold!40}90.40 & \cellcolor{silver!40}92.00 & \cellcolor{gold!40}91.20 & +0.80\% & \cellcolor{silver!40}89.80 & +0.99\% \\
\raisebox{-0.2\height}{\includegraphics[height=0.9em]{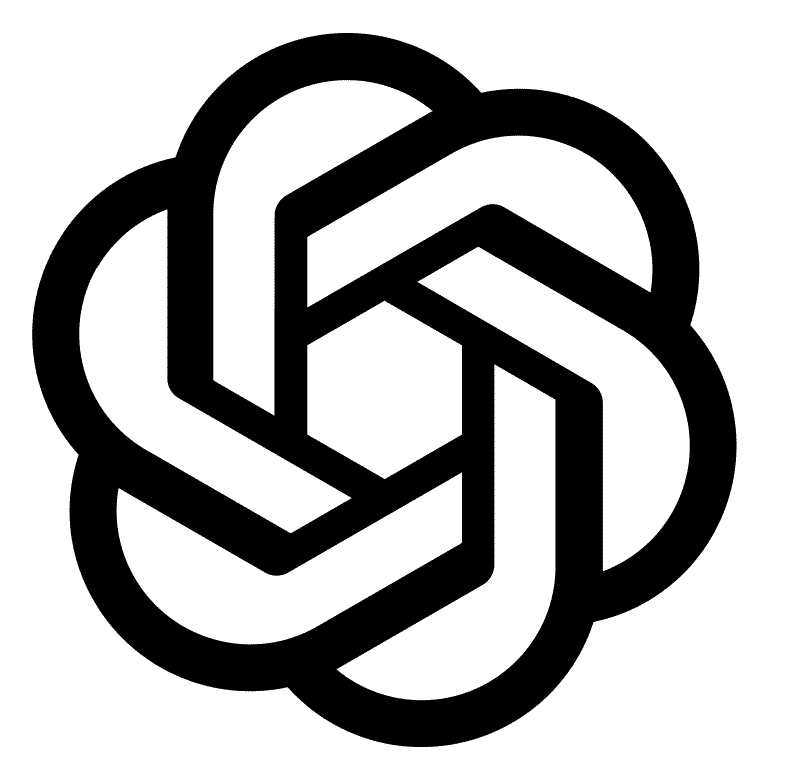}}\hspace{3pt}GPT-5.2
& \cellcolor{silver!40}86.36 & 84.00 & 85.60 & +1.60\% & \cellcolor{bronze!40}85.60 & +1.60\% \\
\raisebox{-0.2\height}{\includegraphics[height=0.9em]{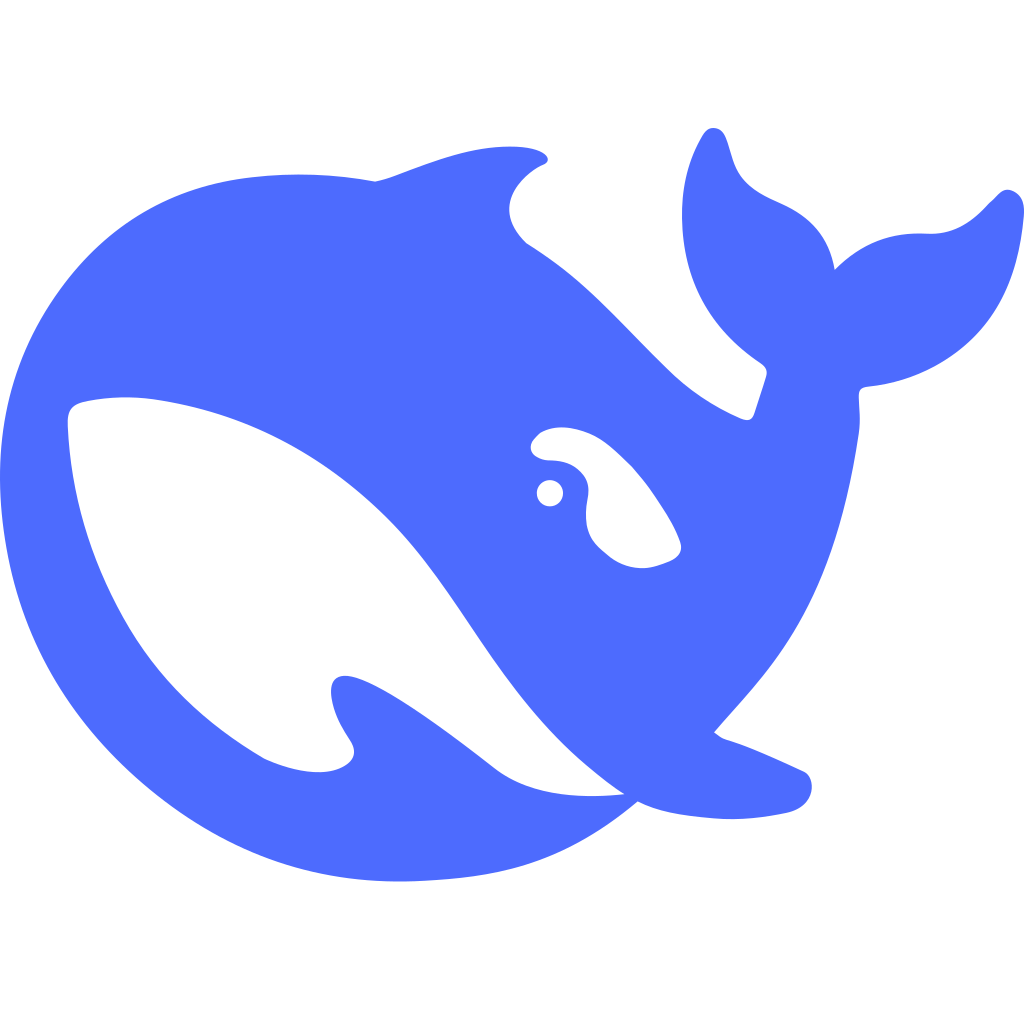}}\hspace{3pt}DeepSeek-V3.2(Thinking)
& \cellcolor{bronze!40}84.85 & \cellcolor{gold!40}94.00 & \cellcolor{silver!40}88.80 & +5.20\% & \cellcolor{gold!40}90.80 & +3.20\% \\
\raisebox{-0.2\height}{\includegraphics[height=0.9em]{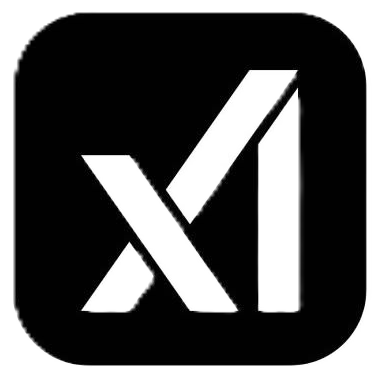}}\hspace{3pt}Grok-4-1-Fast-Reasoning
& 83.33 & \cellcolor{bronze!40}88.00 & \cellcolor{bronze!40}85.80 & +2.20\% & 83.60 & +4.40\% \\
\raisebox{-0.2\height}{\includegraphics[height=0.9em]{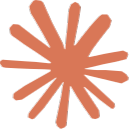}}\hspace{3pt}Claude-Sonnet-4.5
& 82.83 & 78.00 & 80.60 & +2.60\% & 81.40 & +3.40\% \\
\raisebox{-0.2\height}{\includegraphics[height=0.9em]{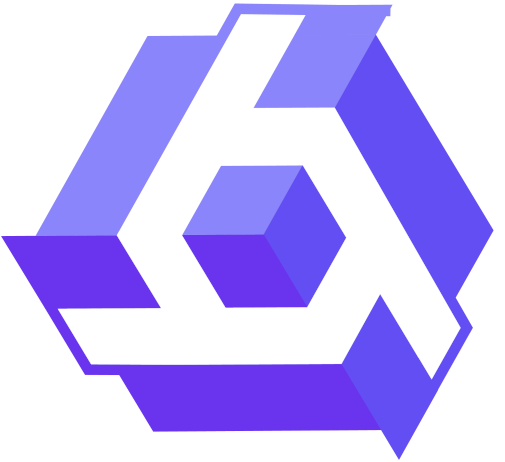}}\hspace{3pt}Qwen3-Next-80B-A3B-Instruct
& 76.26 & 82.00 & 76.80 & +5.20\% & 78.40 & +3.60\% \\
\raisebox{-0.2\height}{\includegraphics[height=0.9em]{Figure/icon/deepseek.png}}\hspace{3pt}DeepSeek-R1-Distill-Llama-70B
& 59.09 & 60.00 & 59.00 & +0.82\% & 59.00 & +0.82\% \\
\raisebox{-0.2\height}{\includegraphics[height=0.9em]{Figure/icon/qwen.png}}\hspace{3pt}QwQ-32B-Preview
& 56.06 & 60.00 & 55.40 & +3.28\% & 56.00 & +3.88\% \\
\raisebox{-0.2\height}{\includegraphics[height=0.6em]{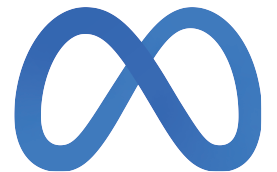}}\hspace{3pt}Llama-3.3-70B-Instruct
& 46.97 & 32.00 & 37.60 & +5.60\% & 37.20 & +5.20\% \\
\raisebox{-0.2\height}{\includegraphics[height=0.9em]{Figure/icon/deepseek.png}}\hspace{3pt}DeepSeek-R1-Distill-Qwen-32B
& 43.43 & 40.00 & 41.40 & +1.40\% & 43.00 & +3.00\% \\
\bottomrule
\end{tabular}
\caption{GPQA Diamond EIF performance results with $N=50$ samples (full dataset: $N=198$). Config 1 uses GPT-5 mini for both auxiliary models; Config 2 uses GPT-5 mini and DeepSeek-V3.2. Top 3 accuracies are highlighted: \colorbox{gold!40}{1st}, \colorbox{silver!40}{2nd}, \colorbox{bronze!40}{3rd}.}
\label{tab:gpqa_eif}
\end{table*}

\begin{table*}[!t]
\centering
\small
\begin{tabular}{l cc cc cc}
\toprule
& & & \multicolumn{2}{c}{\textbf{Config 1}} & \multicolumn{2}{c}{\textbf{Config 2}} \\
\cmidrule(lr){4-5} \cmidrule(lr){6-7}
\textbf{Model} & \textbf{GT\%} & \textbf{Naive\%} & \textbf{One-step\%} & \textbf{Improv.} & \textbf{One-step\%} & \textbf{Improv.} \\
\midrule
\raisebox{-0.2\height}{\includegraphics[height=0.9em]{Figure/icon/gpt.png}}\hspace{3pt}GPT-5.2
& \cellcolor{gold!40}96.67 & \cellcolor{gold!40}93.33 & \cellcolor{silver!40}96.93 & +3.08\% & \cellcolor{silver!40}96.43 & +3.10\% \\
\raisebox{-0.2\height}{\includegraphics[height=0.9em]{Figure/icon/gemini.png}}\hspace{3pt}Gemini-3-Flash-Preview
& \cellcolor{gold!40}96.67 & \cellcolor{gold!40}93.33 & \cellcolor{gold!40}99.93 & +0.08\% & \cellcolor{gold!40}97.77 & +2.24\% \\
\raisebox{-0.2\height}{\includegraphics[height=0.9em]{Figure/icon/deepseek.png}}\hspace{3pt}DeepSeek-V3.2(Thinking)
& \cellcolor{bronze!40}90.00 & \cellcolor{bronze!40}86.67 & \cellcolor{bronze!40}91.00 & +2.33\% & \cellcolor{bronze!40}90.10 & +3.23\% \\
\raisebox{-0.2\height}{\includegraphics[height=0.9em]{Figure/icon/grok.png}}\hspace{3pt}Grok-4-1-Fast-Reasoning
& 86.67 & 80.00 & 87.33 & +6.01\% & 89.01 & +4.33\% \\
\raisebox{-0.2\height}{\includegraphics[height=0.9em]{Figure/icon/claude.png}}\hspace{3pt}Claude-Sonnet-4.5
& 83.33 & 66.67 & 70.67 & +4.00\% & 78.67 & +12.00\% \\
\raisebox{-0.2\height}{\includegraphics[height=0.9em]{Figure/icon/qwen.png}}\hspace{3pt}Qwen3-Next-80B-A3B-Instruct
& 73.33 & 73.33 & 72.27 & -1.06\% & 78.66 & -5.33\% \\
\raisebox{-0.2\height}{\includegraphics[height=0.9em]{Figure/icon/deepseek.png}}\hspace{3pt}DeepSeek-R1-Distill-Llama-70B
& 53.33 & 46.67 & 50.60 & +3.93\% & 54.10 & +5.89\% \\
\raisebox{-0.2\height}{\includegraphics[height=0.9em]{Figure/icon/deepseek.png}}\hspace{3pt}DeepSeek-R1-Distill-Qwen-32B
& 53.33 & 46.67 & 50.27 & +3.60\% & 54.86 & +5.13\% \\
\raisebox{-0.2\height}{\includegraphics[height=0.9em]{Figure/icon/qwen.png}}\hspace{3pt}QwQ-32B-Preview
& 30.00 & 33.33 & 29.67 & +3.00\% & 29.37 & +2.70\% \\
\raisebox{-0.2\height}{\includegraphics[height=0.6em]{Figure/icon/llama.png}}\hspace{3pt}Llama-3.3-70B-Instruct
& 6.67 & 6.67 & 8.33 & -1.66\% & 6.24 & -0.43\% \\
\bottomrule
\end{tabular}
\caption{AIME 2025 EIF performance results with $N=15$ samples (full dataset: $N=30$). Config 1 uses GPT-5 mini for both auxiliary models; Config 2 uses GPT-5 mini and DeepSeek-V3.2. Top 3 accuracies in GT\% column are highlighted: \colorbox{gold!40}{1st}, \colorbox{silver!40}{2nd}, \colorbox{bronze!40}{3rd}.}
\label{tab:aime_eif_combined}
\end{table*}

\begin{table*}[!t]
\centering
\small
\begin{tabular}{l cc cc cc}
\toprule
& & & \multicolumn{2}{c}{\textbf{Config 1}} & \multicolumn{2}{c}{\textbf{Config 2}} \\
\cmidrule(lr){4-5} \cmidrule(lr){6-7}
\textbf{Model} & \textbf{GT\%} & \textbf{Naive\%} & \textbf{One-step\%} & \textbf{Improv.} & \textbf{One-step\%} & \textbf{Improv.} \\
\midrule
\raisebox{-0.2\height}{\includegraphics[height=0.9em]{Figure/icon/gemini.png}}\hspace{3pt}Gemini-3-Flash-Preview
& \cellcolor{gold!40}97.88 & \cellcolor{gold!40}98.00 & \cellcolor{gold!40}97.90 & +0.10\% & \cellcolor{gold!40}98.30 & -0.30\% \\
\raisebox{-0.2\height}{\includegraphics[height=0.9em]{Figure/icon/gpt.png}}\hspace{3pt}GPT-5.2
& \cellcolor{silver!40}97.50 & \cellcolor{silver!40}97.00 & \cellcolor{silver!40}97.65 & +0.35\% & \cellcolor{bronze!40}97.00 & +0.50\% \\
\raisebox{-0.2\height}{\includegraphics[height=0.9em]{Figure/icon/claude.png}}\hspace{3pt}Claude-Sonnet-4.5
& \cellcolor{silver!40}97.50 & \cellcolor{bronze!40}95.00 & \cellcolor{bronze!40}96.70 & +1.70\% & \cellcolor{silver!40}98.15 & +1.85\% \\
\raisebox{-0.2\height}{\includegraphics[height=0.6em]{Figure/icon/llama.png}}\hspace{3pt}Llama-3.3-70B-Instruct
& 96.44 & 94.00 & 96.00 & +2.00\% & 95.70 & +1.70\% \\
\raisebox{-0.2\height}{\includegraphics[height=0.9em]{Figure/icon/deepseek.png}}\hspace{3pt}DeepSeek-R1-Distill-Llama-70B
& 95.83 & 92.00 & 95.50 & +3.50\% & 94.50 & +2.50\% \\
\raisebox{-0.2\height}{\includegraphics[height=0.9em]{Figure/icon/qwen.png}}\hspace{3pt}QwQ-32B-Preview
& 95.45 & 92.00 & 93.85 & +3.40\% & 94.35 & +2.35\% \\
\raisebox{-0.2\height}{\includegraphics[height=0.9em]{Figure/icon/grok.png}}\hspace{3pt}Grok-4-1-Fast-Reasoning
& 95.07 & 90.00 & 92.65 & +2.65\% & 91.20 & +1.20\% \\
\raisebox{-0.2\height}{\includegraphics[height=0.9em]{Figure/icon/deepseek.png}}\hspace{3pt}DeepSeek-V3.2(Thinking)
& 95.00 & 92.00 & 94.00 & +2.00\% & 92.90 & +0.90\% \\
\raisebox{-0.2\height}{\includegraphics[height=0.9em]{Figure/icon/deepseek.png}}\hspace{3pt}DeepSeek-R1-Distill-Qwen-32B
& 93.71 & 93.00 & 93.90 & +0.51\% & 93.35 & +0.35\% \\
\raisebox{-0.2\height}{\includegraphics[height=0.9em]{Figure/icon/qwen.png}}\hspace{3pt}Qwen3-Next-80B-A3B-Instruct
& 93.48 & 94.00 & 93.65 & +0.35\% & 94.30 & +0.24\% \\
\bottomrule
\end{tabular}
\caption{GSM8K EIF performance results with $N=100$ samples. Config 1 uses GPT-5 mini for both auxiliary models; Config 2 uses GPT-5 mini and DeepSeek-V3.2. Top 3 accuracies are highlighted: \colorbox{gold!40}{1st}, \colorbox{silver!40}{2nd}, \colorbox{bronze!40}{3rd}.}
\label{tab:gsm8k_eif}
\end{table*}
We implement Algorithm~\ref{alg: app icl} (in Appendix \ref{app: real data algorithm}) by employing Gemini-3-flash-preview as the outcome regression function $\hat{\tau}(z,x)$. The marginal projection $\hat{m}(x)$ is approximated via Monte Carlo integration using $M = 10$ auxiliary samples per instance. All the results for comparing naive estimator and the one-step estimator are shown in the Tables \ref{tab:gpqa_eif}, \ref{tab:aime_eif_combined}, and \ref{tab:gsm8k_eif}. In these tables, \textbf{GT\%} denotes the accuracy computed via the naive estimator \eqref{eq: naive estimator} on the full dataset, serving as the ground truth. Both the naive estimator and the one-step estimator are computed on a smaller subset, with sample size $N$ indicated in each table caption, and their accuracy are reported in \textbf{Naive\%} and \textbf{One-step\%} respectively. \textbf{Improv.} denotes the reduction in absolute error, defined as $|\text{Naive}\% - \text{GT}\%| - |\text{One-step}\% - \text{GT}\%|$; positive values indicate that our EIF-based one-step estimator yields estimates closer to the ground truth than the naive estimator.


As demonstrated in Tables \ref{tab:gpqa_eif}, \ref{tab:aime_eif_combined} and \ref{tab:gsm8k_eif}, the one-step estimator consistently yields estimates that are significantly closer to the ground truth than those of the naive estimator, despite being restricted to the same sample budget. For instance, in the Table \ref{tab:aime_eif_combined}, the naive estimator for DeepSeek-V3.2(Thinking) underestimates accuracy by 3.33 percentage points (86.67\% vs. 90.00\% ground truth), whereas the one-step estimator reduces this bias to within 1 percentage point (91.00\% and 90.10\% under Config 1 and 2, respectively), yielding a substantially more calibrated approximation. This superior proximity provides empirical evidence of the enhanced sample efficiency afforded by our approach. 

Furthermore, we evaluated our method across two distinct configurations for generating auxiliary data; notably, both setups yielded comparable performance gains. This consistency suggests that the EIF framework is remarkably general to variations in the auxiliary data-generating process, maintaining its advantage as long as the underlying conditional dependence is preserved. 

Beyond these primary results, we further investigate the scaling behavior of the EIF estimator and its impact on performance benchmarking. Specifically, we analyze how increasing sample size enhances estimation stability and use this refined estimator to re-rank model performance across different mathematical reasoning tasks. These extended analyses, which demonstrate the estimator's superior discriminative resolution and robustness, are detailed in Appendix \ref{app:sample_size_analysis} and \ref{app:model re-ranking}.

\begin{remark}
We shall notice that finite-sample degradation may occur when $Z$ is weak, since the efficiency gain depends on the informativeness of the auxiliary signal. Our guarantees concern variance reduction rather than pointwise dominance. A lower variance does not imply uniformly smaller error for every realization; rather, it ensures improved performance on average.
\end{remark}

\subsection{Additional Robustness Experiments}
\label{subsec:additional_robustness}

We further conduct a broader set of experiments that explicitly vary all four components of the evaluator-side design:
\begin{itemize}
    \item Auxiliary models: we add a new setting where $W_1$ and $W_2$ are generated by GPT-5 mini and QwQ-32B, respectively. The results are summarized in Table~\ref{tab:robust_aux_models}.
    \item Resampling policy: we repeat the analysis on a resampled subset of 15 problems drawn from the full AIME 2025 dataset using a different random seed. The results are summarized in Table~\ref{tab:robust_seed}.
    \item Selection protocol: we vary the subset size by considering a larger subset of 25 problems drawn from the full AIME dataset to examine the method's stability with respect to subset size. See Table~\ref{tab:robust_n25}.
    \item Prompts: we add a new correctness-only rubric (see more details in Appendix~\ref{app:additional_robustness_tables} and Table~\ref{tab:correctness_only}).
\end{itemize}

\begin{table}[H]
\centering
\small
\resizebox{\columnwidth}{!}{
\begin{tabular}{lccc}
\toprule
\textbf{Model} & \textbf{Naive\%} & \textbf{One-step\%} & \textbf{Improve\%} \\
\midrule
\raisebox{-0.2\height}{\includegraphics[height=0.9em]{Figure/icon/gemini.png}}\hspace{3pt}Gemini-3-Flash-Preview & 93.33 & 96.68 & +3.32 \\
\raisebox{-0.2\height}{\includegraphics[height=0.9em]{Figure/icon/deepseek.png}}\hspace{3pt}DeepSeek-V3.2(Thinking) & 86.67 & 87.72 & +1.05 \\
\raisebox{-0.2\height}{\includegraphics[height=0.9em]{Figure/icon/grok.png}}\hspace{3pt}Grok-4-1-Fast-Reasoning & 80.00 & 86.21 & +6.22 \\
\raisebox{-0.2\height}{\includegraphics[height=0.9em]{Figure/icon/gpt.png}}\hspace{3pt}GPT-5.2 & 93.33 & 97.51 & +2.48 \\
\raisebox{-0.2\height}{\includegraphics[height=0.9em]{Figure/icon/claude.png}}\hspace{3pt}Claude-Sonnet-4.5 & 66.67 & 72.95 & +6.29 \\
\bottomrule
\end{tabular}}
\caption{AIME2025 robustness under an alternative auxiliary-model pipeline, where $W_1$ and $W_2$ are generated by GPT5mini and QwQ-32B.}
\label{tab:robust_aux_models}
\end{table}

\begin{table}[H]
\centering
\small
\resizebox{\columnwidth}{!}{
\begin{tabular}{lccc}
\toprule
\textbf{Model} & \textbf{Naive\%} & \textbf{One-step\%} & \textbf{Improve\%} \\
\midrule
\raisebox{-0.2\height}{\includegraphics[height=0.9em]{Figure/icon/claude.png}}\hspace{3pt}Claude-Sonnet-4.5 & 80.00 & 82.67 & +2.66 \\
\raisebox{-0.2\height}{\includegraphics[height=0.9em]{Figure/icon/deepseek.png}}\hspace{3pt}DeepSeek-V3.2(Thinking) & 86.67 & 91.33 & +2.00 \\
\raisebox{-0.2\height}{\includegraphics[height=0.9em]{Figure/icon/gemini.png}}\hspace{3pt}Gemini-3-Flash-Preview & 100.00 & 98.00 & +2.00 \\
\raisebox{-0.2\height}{\includegraphics[height=0.9em]{Figure/icon/gpt.png}}\hspace{3pt}GPT-5.2 & 93.33 & 98.67 & +1.33 \\
\raisebox{-0.2\height}{\includegraphics[height=0.9em]{Figure/icon/grok.png}}\hspace{3pt}Grok-4-1-Fast-Reasoning & 93.33 & 84.00 & +4.00 \\
\bottomrule
\end{tabular}}
\caption{AIME2025 robustness under a different random seed for selecting the $N=15$ subset, using GPT-5 mini and DeepSeek-V3.2 for auxiliary generation.}
\label{tab:robust_seed}
\end{table}

Across these four checks (Tables~\ref{tab:robust_aux_models} and \ref{tab:robust_seed}, and Appendix Tables~\ref{tab:robust_n25} and \ref{tab:correctness_only}), the one-step estimator generally remains closer to the ground truth than the naive estimator. These results suggest that the gains are not driven by a cherry-picked auxiliary pipeline, subset, seed, or judging rubric, but persist under perturbations of the evaluation protocol.

\paragraph{Equal-budget repeated decoding.}
To compare with repeated decoding, we conduct an equal-budget experiment on GPQA Diamond. The one-step estimator uses one target response and 11 comparisons, while the repeated-$Y$ baseline uses 12 target responses. The one-step estimator's results under Config 1 and Config 2 are reported in Table~\ref{tab:gpqa_eif}; Table~\ref{tab:repeated_y} reports the repeated-$Y$ estimates and their gaps to the ground truth.

\begin{table}[H]
\centering
\small
\resizebox{\columnwidth}{!}{
\begin{tabular}{lc}
\toprule
\textbf{Model} & \textbf{12-Y\% (Gap to GT)} \\
\midrule
\raisebox{-0.2\height}{\includegraphics[height=0.9em]{Figure/icon/gemini.png}}\hspace{3pt}Gemini-3-Flash-Preview & 88.50 (1.90) \\
\raisebox{-0.2\height}{\includegraphics[height=0.9em]{Figure/icon/gpt.png}}\hspace{3pt}GPT-5.2 & 85.33 (1.03) \\
\raisebox{-0.2\height}{\includegraphics[height=0.9em]{Figure/icon/deepseek.png}}\hspace{3pt}DeepSeek-V3.2(Thinking) & 83.17 (1.68) \\
\raisebox{-0.2\height}{\includegraphics[height=0.9em]{Figure/icon/grok.png}}\hspace{3pt}Grok-4-1-Fast-Reasoning & 79.50 (3.83) \\
\raisebox{-0.2\height}{\includegraphics[height=0.9em]{Figure/icon/claude.png}}\hspace{3pt}Claude-Sonnet-4.5 & 76.83 (6.00) \\
\bottomrule
\end{tabular}}
\caption{GPQA Diamond equal-budget repeated-$Y$ baseline.}
\label{tab:repeated_y}
\end{table}

Under the same budget, the one-step estimator is closer to the ground truth than repeated-$Y$ in 4 out of 5 models for both Config 1 and Config 2, demonstrating better sample efficiency.

\section{Conclusion}
We present a semiparametric framework for evaluating the mathematical reasoning capabilities of large language models. Our approach combines direct model outputs with auxiliary pairwise preference signals to achieve statistically more efficient estimation. It is worth noting that our framework improves the statistical efficiency of estimating a fixed benchmark metric, but it does not address benchmark validity itself. If a benchmark does not faithfully measure the intended reasoning ability, a more precise estimator cannot resolve that limitation. Our method is therefore complementary to careful benchmark design and validation. Theoretically, our method attains the semiparametric efficiency bound with provable variance reduction; empirically, experiments on GPQA Diamond, AIME 2025, and GSM8K confirm more accurate estimates and reliable model rankings, especially in small-sample regimes. The framework exhibits considerable generality along two dimensions. First, the auxiliary information is not restricted to pairwise comparisons; the methodology accommodates arbitrary auxiliary signals, provided they share mutual information with the target outcome, thereby improving estimation of the target parameter. Second, the framework extends naturally to other performance metrics expressible as statistical functionals, such as pass@$k$ accuracy. Deriving the corresponding efficient influence function for such metrics to enable principled bias-corrected inference constitutes a promising direction for future work. 

\section*{Impact Statement}
This paper advances efficient AI evaluation by leveraging auxiliary signals to reduce reliance on costly ground-truth annotations. This may democratize rigorous model assessment for resource-constrained researchers. However, practitioners should ensure auxiliary data sources do not introduce systematic biases that could propagate through the estimation procedure.



\bibliography{main_reference}
\bibliographystyle{icml2026}

\newpage
\appendix
\onecolumn

\section{Omitted Proofs}
\label{app:proof_eif}
\paragraph{Notation.} Throughout this appendix, for any dataset $\gD$ and function $f$, we follow the convention and denote the sample average of $f$ on $\gD$ and its expectation as $\P_{\gD} [f] = \frac{1}{\abs{\gD}} \sum_{x_i \in \gD} f(x_i), \P [f] = \E \qty[f(X)]$. We denote the full observed data vector for a single instance as $O = (X, Y, G, Z)$, where $X$ is the input prompt, $Y$ is the model response, $G$ is the ground truth, and $Z = (W_1, W_2, V)$ represents the set of auxiliary labels. We define the metric of interest (e.g., accuracy) for a single instance as the random variable $A = \phi(Y, G)$. Consequently, our target parameter is the population mean $\theta = \mathbb{E}[A]$.

\paragraph{Preliminaries.}
Let $O_1, \ldots, O_N \overset{\mathrm{i.i.d.}}{\sim} P \in \mathcal{M}$, where $\mathcal{M}$ is the semiparametric model space in Definition~\ref{dfn:model_space}, and suppose each element of $\mathcal{M}$ admits a density with respect to a common dominating measure $\mu$. Consider a one-dimensional regular parametric submodel $\{P_t : t \in (-\varepsilon, \varepsilon)\} \subset \mathcal{M}$ with $P_0 = P$; its \emph{score function} at $t = 0$ is $\dot\ell(O) = \partial \log p_t(O)/\partial t\big|_{t=0} \in L_2^0(P)$, where $L_2^0(P) = \{h \in L_2(P) : \E[h] = 0\}$. The \emph{tangent set} at $P$ is the collection of all such scores over all regular parametric submodels of $\mathcal{M}$; the \emph{tangent space} $\mathcal{S}$ is its closed linear span in $L_2^0(P)$:
\begin{equation*}
  \mathcal{S} = \overline{\operatorname{lin}}\qty{ \dot\ell \in L_2^0(P) : \dot\ell \text{ is the score of some regular parametric submodel of } \mathcal{M} \text{ at } P }.
\end{equation*}
We use lowercase $p$ and $p_t$ to denote densities of $P$ and $P_t$ with respect to the dominating measure $\mu$.
For additional background on these semiparametric concepts, see \citet[Chapter~25]{vanderVaart1998asymptotic}.

\subsection{Proof of Proposition \ref{prop:eif}}
\paragraph{Proof sketch.}
The idea is: the most efficient estimator is constructed based on its EIF constrained to the corresponding tangent space. Since $p(Z\mid X)$ is fixed, the tangent space decomposes orthogonally into variation in the marginal of $X$ and variation in the conditional of $(Y,G)$ given $(X,Z)$. The EIF is obtained by projecting the nonparametric influence function $\phi(Y,G)-\theta$ onto these two components: the conditional projection yields the residual $\phi(Y,G)-\tau(X,Z)$, and the marginal projection yields $m(X)-\theta$. Summing them gives the EIF in Equation~\eqref{eq:eif}.

\begin{proof}
To derive the Efficient Influence Function (EIF), we project the canonical gradient onto the tangent space of the constrained semiparametric model. The joint density factorizes as $p(O) = p(y, g \mid Z, x) p(Z \mid x) p(x)$. Crucially, $p(Z \mid x)$ is known and fixed, while the other components are nonparametric.

\textbf{Step 1: Decomposition of the Tangent Space.}
The tangent space $\mathcal{S}$ consists of the closure of all score functions associated with parametric submodels. Due to the factorization structure and the constraint that $p(Z \mid x)$ is fixed, $\mathcal{S}$ decomposes into two orthogonal subspaces $\mathcal{S} = \mathcal{T}_1 \oplus \mathcal{T}_2$:
\begin{align*}
    \mathcal{T}_1 &= \{ s_1(X) \in L_2(P) : \mathbb{E}[s_1(X)] = 0 \}, \\
    \mathcal{T}_2 &= \{ s_2(O) \in L_2(P) : \mathbb{E}[s_2(O) \mid X, Z] = 0 \}.
\end{align*}
Here, $\mathcal{T}_1$ corresponds to perturbations of the marginal $p(x)$, and $\mathcal{T}_2$ corresponds to perturbations of the conditional $p(y, g \mid Z, x)$. Scores corresponding to $p(Z \mid x)$ are zero because this distribution is fixed. We verify orthogonality by iterated expectations: for any $s_1 \in \mathcal{T}_1, s_2 \in \mathcal{T}_2$, $\mathbb{E}[s_1 s_2] = \mathbb{E}[s_1(X) \mathbb{E}[s_2(O) \mid X, Z]] = 0$.

\textbf{Step 2: The Initial Gradient.}
In a fully nonparametric model (where all components are unknown), the influence function for the mean parameter $\theta$ is simply the centered random variable:
\begin{equation*}
    \psi_{\text{np}}(O) = A - \theta.
\end{equation*}
The EIF in our constrained model is the projection of $\psi_{\text{np}}$ onto $\mathcal{S} = \mathcal{T}_1 \oplus \mathcal{T}_2$, i.e.,
$\psi = \mathbb{E}[\psi_{\text{np}} \mid \mathcal{S}] = \mathbb{E}[\phi(Y,G) - \theta \mid \mathcal{S}]$. Since the subspaces are orthogonal, the projection is the sum of separate projections.

\textbf{Step 3: Projection onto $\mathcal{T}_2$.}
The projection of any $U \in L_2(P)$ onto the space of conditional mean-zero functions $\mathcal{T}_2$ is given by $U - \mathbb{E}[U \mid X, Z]$. Applying this to $\psi_{\text{np}}$:
\begin{align*}
    \Pi(\psi_{\text{np}} \mid \mathcal{T}_2) &= (A - \theta) - \mathbb{E}[A - \theta \mid X, Z] \\
    &= A - \mathbb{E}[A \mid X, Z].
\end{align*}
Recalling the definition of the full regression function $\tau(X,Z) = \mathbb{E}[\phi(Y, G) \mid Z, X]$, this yields:
\begin{equation*}
    \Pi(\psi_{\text{np}} \mid \mathcal{T}_2) = \phi(Y, G) - \tau(X,Z).
\end{equation*}

\textbf{Step 4: Projection onto $\mathcal{T}_1$.}
The projection of any $U \in L_2(P)$ onto the space of marginal mean-zero functions $\mathcal{T}_1$ is given by $\mathbb{E}[U \mid X] - \mathbb{E}[U]$. Applying this to $\psi_{\text{np}}$:
\begin{equation*}
    \Pi(\psi_{\text{np}} \mid \mathcal{T}_1) = \mathbb{E}[A - \theta \mid X] - \underbrace{\mathbb{E}[A - \theta]}_{=0} = \mathbb{E}[A \mid X] - \theta.
\end{equation*}
To evaluate $\mathbb{E}[A \mid X]$, we apply the law of iterated expectations conditioning on $Z$. Since $p(Z \mid x)$ is known, this expectation is equivalent to the integrated regression function $m(X)$:
\begin{equation*}
    \mathbb{E}[A \mid X] = \mathbb{E}_{Z|X} \Big[ \mathbb{E}[A \mid X, Z] \Big] = \int \tau(X,Z) \, dP(Z \mid X) = m(X).
\end{equation*}
Thus, the projection is:
\begin{equation*}
    \Pi(\psi_{\text{np}} \mid \mathcal{T}_1) = m(X) - \theta.
\end{equation*}

\textbf{Step 5: Summation.}
The Efficient Influence Function is the sum of the projections derived in Steps 3 and 4:
\begin{equation*}
    \psi(O) = \Big(\phi(Y, G) - \tau(X,Z)\Big) + \Big(m(X) - \theta\Big).
\end{equation*}
\end{proof}

\subsection{Proof of Theorem \ref{thm:asymptotic_normality}}

\begin{proof}
Let $K$ be the number of folds. Let $I_k$ denote the indices of the $k$-th fold, and $I_k^c$ denote the complement (training set). Let $\hat{\tau}^{(-k)}$ and $\hat{m}^{(-k)}$ be the estimators trained on $I_k^c$. The estimated influence function for data point $i \in I_k$ is:
\begin{equation}
    \hat{\psi}_i = \hat{m}^{(-k)}(X_i) + \phi(Y_i, G_i) - \hat{\tau}^{(-k)}(Z_i, X_i).
\end{equation}
The true influence function is $\psi_i = m(X_i) + \phi(Y_i, G_i) - \tau(Z_i, X_i) - \theta$.

We decompose the empirical process for the estimator $\hat{\theta} = \frac{1}{N} \sum_{k=1}^K \sum_{i \in I_k} \hat{\psi}_i$:
\begin{align}
    \sqrt{N}(\hat{\theta} - \theta)
    &= \frac{1}{\sqrt{N}} \sum_{k=1}^K \sum_{i \in I_k} \hat{\psi}_i - \sqrt{N}\theta \nonumber \\
    &= \frac{1}{\sqrt{N}} \sum_{i=1}^N \psi_i \quad \text{(Main Term)} \label{eq:clt_term} \\
    &\quad + \frac{1}{\sqrt{N}} \sum_{k=1}^K \sum_{i \in I_k} (\hat{\psi}_i - \psi_i). \quad \text{(Remainder Term)} \label{eq:rem_term}
\end{align}

\textbf{Step 1: The Main Term.}
Term \eqref{eq:clt_term} is $\sqrt{N} \mathbb{P}_{\gD}[\psi]$. Under Assumption \ref{asm:moment}, we have $\E[\phi^2] < \infty$. By Jensen's inequality, $\E[\tau^2] \leq \E[\phi^2] < \infty$, which implies that the influence function $\psi$ has a finite variance (i.e., $\Var(\psi) < \infty$). Since the observations are i.i.d. and $\mathbb{P}[\psi] = 0$, by the Central Limit Theorem, this term converges to $\mathcal{N}(0, \sigma^2_{\text{eff}})$.

\textbf{Step 2: Analysis of the Remainder Term.}
Let $R_N$ denote the remainder term \eqref{eq:rem_term}. We define the difference function for fold $k$ as $\Delta^{(-k)}(O) = \hat{\psi}^{(-k)}(O) - \psi(O)$. We can decompose the sum over fold $I_k$ into an empirical process term and a bias term:
\begin{equation}
    \frac{1}{\sqrt{N}} \sum_{i \in I_k} \Delta^{(-k)}(O_i) = \sqrt{\frac{N_k}{N}} \sqrt{N_k} \qty( \mathbb{P}_{I_k}[\Delta^{(-k)}] - \mathbb{P}[\Delta^{(-k)}] ) + \sqrt{\frac{N_k}{N}} \sqrt{N_k} \mathbb{P}[\Delta^{(-k)}].
\end{equation}

\textbf{Step 2a: The Stochastic Term.}
Conditional on the training data $I_k^c$, we compute the variance of the scaled stochastic term. Since the summands are i.i.d. given $I_k^c$, we have:
\begin{align*}
    \Var\qty( \sqrt{N_k} \qty( \P_{I_k}[\Delta^{(-k)}] - \P[\Delta^{(-k)}] ) \;\middle|\; I_k^c )
    &= \Var\qty( \frac{1}{\sqrt{N_k}} \sum_{i \in I_k} \Delta^{(-k)}(O_i) \;\middle|\; I_k^c ) \\
    &= \Var(\Delta^{(-k)} \mid I_k^c).
\end{align*}
We bound this variance using the $L_2$ estimation error:
\begin{align*}
    \Var(\Delta^{(-k)} \mid I_k^c) &\leq \P\qty[ (\Delta^{(-k)})^2 \mid I_k^c ] \\
    &\leq 2 \|\hat{m}^{(-k)} - m\|_{P,2}^2 + 2 \|\hat{\tau}^{(-k)} - \tau\|_{P,2}^2.
\end{align*}
By Chebyshev's inequality applied to the conditional distribution, for any $\epsilon > 0$:
\begin{align*}
    \P\qty( \abs{ \sqrt{N_k} \qty( \P_{I_k}[\Delta^{(-k)}] - \P[\Delta^{(-k)}] ) } > \epsilon \;\middle|\; I_k^c )
    &\leq \frac{\Var(\Delta^{(-k)} \mid I_k^c)}{\epsilon^2} \\
    &\leq \frac{2 \|\hat{m}^{(-k)} - m\|_{P,2}^2 + 2 \|\hat{\tau}^{(-k)} - \tau\|_{P,2}^2}{\epsilon^2}.
\end{align*}
To obtain the unconditional convergence, we take the expectation over the training data $I_k^c$. Let $E_k$ denote the upper bound derived above. We decompose the error term $\|\hat{m}^{(-k)} - m\|_{P,2}^2$ by introducing the exact integrated regression function $\tilde{m}^{(-k)}(X) = \E_{Z|X}[\hat{\tau}^{(-k)}(Z,X)]$. Using the inequality $(a+b)^2 \leq 2a^2 + 2b^2$:
\begin{align*}
    \|\hat{m}^{(-k)} - m\|_{P,2}^2
    &= \|\hat{m}^{(-k)} - \tilde{m}^{(-k)} + \tilde{m}^{(-k)} - m\|_{P,2}^2 \\
    &\leq 2\underbrace{\|\hat{m}^{(-k)} - \tilde{m}^{(-k)}\|_{P,2}^2}_{\text{Monte Carlo Error}} + 2\underbrace{\|\tilde{m}^{(-k)} - m\|_{P,2}^2}_{\text{Statistical Error}}.
\end{align*}
The statistical error $\|\tilde{m}^{(-k)} - m\|_{P,2}^2$ is bounded by $\|\hat{\tau}^{(-k)} - \tau\|_{P,2}^2$ via Jensen's inequality. The Monte Carlo error corresponds to the variance of the integration approximation, which is $O_p(1/M)$.
Substituting this decomposition back into the expression for $E_k$:
\begin{equation*}
    E_k \leq \frac{4 \|\hat{m}^{(-k)} - \tilde{m}^{(-k)}\|_{P,2}^2 + 6 \|\hat{\tau}^{(-k)} - \tau\|_{P,2}^2}{\epsilon^2}.
\end{equation*}
Under Assumption \ref{asm:consistency} (consistency of $\hat{\tau}$) and Assumption \ref{asm:mc_error} (sufficient Monte Carlo samples, $M \to \infty$), both terms in the numerator converge to zero in probability.

Since the conditional probability is bounded by 1, we use the bounded convergence argument:
\begin{align*}
    \P\qty( \abs{ \sqrt{N_k} \qty( \P_{I_k}[\Delta^{(-k)}] - \P[\Delta^{(-k)}] ) } > \epsilon )
    &= \E\qty[ P\qty( \abs{ \sqrt{N_k} \qty( \P_{I_k}[\Delta^{(-k)}] - \P[\Delta^{(-k)}] ) } > \epsilon \;\middle|\; I_k^c ) ] \\
    &\leq \E\qty[ \min\qty(1, E_k) ].
\end{align*}
Since $E_k \xrightarrow{p} 0$ and the function $f(x) = \min(1, x)$ is bounded and continuous at 0, it follows that $\E[\min(1, E_k)] \to 0$. Thus, the stochastic term is $o_p(1)$.

\textbf{Step 2b: The Bias Term.}
We examine the unconditional expectation of the difference function $\P[\Delta^{(-k)}]$. By the Law of Iterated Expectations, $\P[\Delta^{(-k)}] = \E_{I_k^c}[ \E[\Delta^{(-k)} \mid I_k^c] ]$. We analyze the inner conditional expectation.

Let $\tilde{m}^{(-k)}(X) = \E_{Z|X} [ \hat{\tau}^{(-k)}(X,Z) \mid X ]$ denote the \textit{exact} integrated regression function. We decompose the bias into the Monte Carlo approximation error and the structural error:
\begin{align}
    \E[\Delta^{(-k)} \mid I_k^c]
    &= \E\qty[ (\hat{m}^{(-k)} - m) - (\hat{\tau}^{(-k)} - \tau) \mid I_k^c ] \nonumber \\
    &= \E\qty[ (\hat{m}^{(-k)} - \tilde{m}^{(-k)}) + (\tilde{m}^{(-k)} - \hat{\tau}^{(-k)}) \mid I_k^c ] - \underbrace{\E[m - \tau]}_{=0}.
\end{align}

First, we analyze the Monte Carlo term. Since $\hat{m}^{(-k)}(X)$ is constructed by averaging $\hat{\tau}^{(-k)}$ over $M$ i.i.d. samples drawn from $P(Z|X)$, it is an unbiased estimator of $\tilde{m}^{(-k)}(X)$. Thus:
\begin{equation}
    \E_{Z_{1:M}|X} \qty[ \hat{m}^{(-k)}(X) ] = \tilde{m}^{(-k)}(X) \implies \E\qty[ \hat{m}^{(-k)} - \tilde{m}^{(-k)} \mid I_k^c ] = 0.
\end{equation}

Second, we analyze the structural term involving $\hat{\tau}^{(-k)}$. Applying the Law of Iterated Expectations:
\begin{equation}
    \E_{X, Z}\qty[ \hat{\tau}^{(-k)}(X,Z) ] = \E_X \qty[ \E_{Z|X} [ \hat{\tau}^{(-k)}(X,Z) \mid X ] ] = \E_X \qty[ \tilde{m}^{(-k)}(X) ].
\end{equation}
Therefore, $\E\qty[ \tilde{m}^{(-k)} - \hat{\tau}^{(-k)} \mid I_k^c ] = 0$.

Consequently, $\E[\Delta^{(-k)} \mid I_k^c] = 0$. The bias term vanishes exactly. This property holds because the Monte Carlo simulation is unbiased, and the relationship between $\tilde{m}$ and $\hat{\tau}$ is enforced structurally via the known distribution $P(Z|X)$.

\textbf{Conclusion.}
Since $R_N = o_p(1)$, we have $\sqrt{N}(\hat{\theta} - \theta) = \frac{1}{\sqrt{N}}\sum \psi_i + o_p(1)$, which converges to $\mathcal{N}(0, \sigma^2_{\text{eff}})$.
\end{proof}

\subsection{Proof of Corollary \ref{cor:strict_gain}}

\begin{proof}
To prove the strict variance reduction, we utilize the Law of Total Variance to decompose the variance of the naive estimator.

Recall the definitions of the regression functions:
\begin{align*}
    \tau(X,Z) &= \E[\phi(Y, G) \mid Z, X], \\
    m(X) &= \E[\phi(Y, G) \mid X] = \E[\tau(X,Z) \mid X].
\end{align*}

We decompose the total variance $\sigma^2_{\text{naive}} = \Var(\phi(Y, G))$ by iteratively conditioning on $X$ and then on $(X,Z)$:
\begin{align}
    \Var(\phi(Y, G)) &= \Var\Big( \E[\phi(Y, G) \mid X] \Big) + \E\Big[ \Var(\phi(Y, G) \mid X) \Big] \nonumber \\
    &= \Var(m(X)) + \E\Bigg[ \underbrace{\Var\big( \E[\phi(Y, G) \mid Z, X] \mid X \big)}_{\text{Variance explained by } Z} + \Var\big( \phi(Y, G) \mid Z, X \big) \Bigg] \nonumber \\
    &= \Var(m(X)) + \E\Big[ \Var(\tau(X,Z) \mid X) \Big] + \E\Big[ \Var(\phi(Y, G) \mid Z, X) \Big]. \label{eq:naive_decomp}
\end{align}

Now consider the variance of our efficient estimator. The Efficient Influence Function is $\psi = (\phi - \tau) + (m - \theta)$. Since the residual $(\phi - \tau)$ is orthogonal to any function of $X$ (and thus orthogonal to $m(X)$), the variance is:
\begin{align}
    \sigma^2_{\text{eff}} = \Var(\psi) &= \Var(m(X)) + \Var\big( \phi(Y, G) - \tau(X,Z) \big) \nonumber \\
    &= \Var(m(X)) + \E\Big[ \Var(\phi(Y, G) \mid Z, X) \Big]. \label{eq:eff_decomp}
\end{align}

Subtracting \eqref{eq:eff_decomp} from \eqref{eq:naive_decomp}, the difference is exactly the middle term:
\begin{equation}
    \sigma^2_{\text{naive}} - \sigma^2_{\text{eff}} = \E\Big[ \Var\big( \tau(X,Z) \mid X \big) \Big].
\end{equation}

To rigorously establish the strict inequality $\sigma^2_{\text{eff}} < \sigma^2_{\text{naive}}$, we proceed by contradiction.

First, recall the variance decomposition relation:
\begin{equation*}
    \sigma^2_{\text{naive}} = \sigma^2_{\text{eff}} + \E\left[ \Var(\tau(X,Z) \mid X) \right] = \sigma^2_{\text{eff}} + \E\left[ \big(\tau(X,Z) - m(X)\big)^2 \right].
\end{equation*}
Suppose, for the sake of contradiction, that there is no efficiency gain, i.e., $\sigma^2_{\text{eff}} = \sigma^2_{\text{naive}}$. Substituting this into the decomposition implies:
\begin{equation*}
    \E\left[ \big(\tau(X,Z) - m(X)\big)^2 \right] = 0.
\end{equation*}
Since the squared term is non-negative, its expectation being zero implies that the term itself must be zero almost surely:
\begin{equation*}
    \tau(X,Z) = m(X) \quad \text{a.s.}
\end{equation*}
This equality asserts that the auxiliary variable $Z$ is conditionally mean-independent of the target metric given $X$. However, this contradicts our premise that $Z$ contains non-redundant information, specifically the condition that:
\begin{equation*}
    \tau(X,Z) \neq m(X)
\end{equation*}
on a set of positive probability. Therefore, the assumption $\sigma^2_{\text{eff}} = \sigma^2_{\text{naive}}$ must be false, and we conclude that the inequality is strict:
\begin{equation*}
    \sigma^2_{\text{eff}} < \sigma^2_{\text{naive}}.
\end{equation*}
\end{proof}

\subsection{Orthogonality of EIF Components}
\label{app:orthogonality}

We show that the two additive terms in the EIF, $(m(X) - \theta)$ and $(\phi(Y,G) - \tau(X,Z))$, are orthogonal (i.e., have zero covariance).

\begin{proof}
First, note that both terms have mean zero:
\begin{align*}
    \E[m(X) - \theta] &= \E[m(X)] - \theta = \theta - \theta = 0, \\
    \E[\phi(Y,G) - \tau(X,Z)] &= \E[\phi(Y,G)] - \E[\tau(X,Z)] = \theta - \theta = 0.
\end{align*}

To show orthogonality, we compute:
\begin{align*}
    &\Cov\big(m(X) - \theta, \phi(Y,G) - \tau(X,Z)\big) \\
    &= \E\Big[(m(X) - \theta)\big(\phi(Y,G) - \tau(X,Z)\big)\Big] \\
    &= \E\Big[\E\big[(m(X) - \theta)(\phi(Y,G) - \tau(X,Z)) \mid X, Z\big]\Big] \\
    &= \E\Big[(m(X) - \theta) \cdot \E\big[\phi(Y,G) - \tau(X,Z) \mid X, Z\big]\Big].
\end{align*}

By the definition of $\tau(X,Z) = \E[\phi(Y,G) \mid X, Z]$, we have:
\begin{equation*}
    \E[\phi(Y,G) - \tau(X,Z) \mid X, Z] = \tau(X,Z) - \tau(X,Z) = 0.
\end{equation*}

Therefore:
\begin{equation*}
    \Cov\big(m(X) - \theta, \phi(Y,G) - \tau(X,Z)\big) = \E\Big[(m(X) - \theta) \cdot 0\Big] = 0.
\end{equation*}

This orthogonality implies that the variance of the EIF decomposes additively:
\begin{equation*}
    \Var(\psi) = \Var(m(X) - \theta) + \Var(\phi(Y,G) - \tau(X,Z)).
\end{equation*}
\end{proof}

\section{Details of Simulation Study}

\subsection{Derivation of the Efficient Influence Function}
\label{app:simulation_derivation}
We explicitly derive the Efficient Influence Function (EIF) $\psi(O)$ for this specific setting. Recall the general form from equation (\ref{eq:eif}):
\begin{equation*}
    \psi(O) =
    \Big( m(X) - \theta \Big)+
    \Big( \phi(Y, G) - \tau(X,Z) \Big)
\end{equation*}

\paragraph{The Integrated Regression Function $m(X)$.}
Since the noise $\epsilon_{l i}$ is independent of the input $X$, the conditional expectation of the metric function $\phi(Y,G)$ given only $X$ is constant:
\begin{equation*}
m(X) = \E[\epsilon_{l i}^2 \mid X] = \E[\epsilon_{l i}^2] = \sigma_l^2 = \theta_l.
\end{equation*}
Thus, the term $(m(X) - \theta_l)$ vanishes exactly.

\paragraph{The Full Regression Function $\tau(X,Z)$.}
Let $S = W - X = \rho \epsilon + \eta$ be the observed signal centered at zero for simplicity. Since the noise $\eta$ is normal with zero mean, the joint distribution of $(\epsilon, S)$ is therefore multivariate normal with zero mean. By the property of conditional mean of multivariate normal, we obtain the conditional expectation of $\epsilon$ given $S$ to be linear:
\begin{equation*}
\E[\epsilon \mid S] = \kappa S, \text{ where } \kappa = \frac{\Cov(\epsilon, S)}{\Var(S)} = \frac{\rho \sigma_l^2}{\rho^2 \sigma_l^2 + \sigma_\eta^2}.
\end{equation*}
By the law of total variance that $\E[\epsilon^2 \mid S] = \Var(\epsilon \mid S) + (\E[\epsilon \mid S])^2$, the regression function $\tau(X,Z)$ is simplified to:
\begin{equation}
\label{eq: tau of simulation}
\tau(X,Z) = \E[\epsilon^2 \mid S] = \underbrace{(1 - \kappa\rho)\sigma_l^2}_{\text{Intercept}} + \underbrace{\kappa^2 (W - X)^2}_{\text{Quadratic}}.
\end{equation}
This formula highlights a critical requirement for implementation in this setting that the regression model must incorporate quadratic additional features to capture the relationship between the linear auxiliary variable $W$ and the quadratic metric $\phi$.

\subsection{Theoretical Variance Reduction}
As established in Section \ref{subsec:variance reduction}, the one-step estimator is theoretically guaranteed to yield significant efficiency gains. To bridge the gap between theory and the observed ranking performance, we provide a formal variance reduction analysis specific to this simulation setting. This is accompanied by numerical simulations that offer a more intuitive visualization of the variance reduction, confirming that the superior ranking accuracy is a direct manifestation of the estimator's reduced uncertainty.

\paragraph{Variance of the Naive Estimator.}
The naive estimator is based on the function $\phi(Y,G) = (Y-G)^2 = \epsilon^2$. For the Gaussian variable $\epsilon \sim \mathcal{N}(0, \sigma_l^2)$, the variance of its square is determined by the fourth moment as follows:
\begin{equation*}
\sigma^2_{\text{naive}} := \Var(\phi) = \Var(\epsilon^2) = \E[\epsilon^4] - (\E[\epsilon^2])^2 = 3\sigma_l^4 - \sigma_l^4 = 2\sigma_l^4.
\end{equation*}

\paragraph{Variance of the One-Step Estimator.}
By the law of total variance, the variance of the influence function $\psi = \phi - \tau(X,Z) + m(X)$ is the residual variance after regression:
\begin{equation*}
\sigma^2_{\text{1-step}} := \Var(\psi) = \Var(\phi - \tau(X,Z)) = \sigma^2_{\text{naive}} - \Var(\tau(X,Z)).
\end{equation*}

From the derivation of $\tau$ in (\ref{eq: tau of simulation}), the explained variance comes from the term $k^2 S^2$. Since $S = \rho \epsilon+\eta$ where $\epsilon$ and $\eta$ are Gaussian random variables, we have $S \sim \mathcal{N}(0, \Var(S))$ to be normal, and $\Var(S^2) = 2 [\Var(S)]^2$. Therefore:
\begin{equation}
\label{eq:variance of tau}
\Var(\tau) = 2 \left( \frac{\Cov(\epsilon, S)^2}{\Var(S)} \right)^2.
\end{equation}
Let $R^2$ denote the coefficient of determination:
\begin{equation*}
R^2 = \frac{\Cov(\epsilon, S)^2}{\Var(\epsilon)\Var(S)} = \frac{\rho^2 \sigma_l^2}{\rho^2 \sigma_l^2 + \sigma_\eta^2}.
\label{eq:r2_linear}
\end{equation*}

Substituting this back to (\ref{eq:variance of tau}), we obtain $\Var(\tau) = 2\sigma_l^4 (R^2)^2$. Therefore, we obtain the variance of the one-step estimator as follows:
\begin{equation}
\sigma^2_{\text{1-step}} = 2\sigma_l^4 \left[ 1 - (R^2)^2 \right].
\label{eq:var_1step}
\end{equation}

\paragraph{Simulation Results and Analysis}
Based on the form of the variance of naive estimator and one-step estimator, we use the variance reduction ratio (VR) as our criterion, which is given by:
\begin{equation*}
\text{VR} = 1 - \frac{\sigma^2_{\text{1-step}}}{\sigma^2_{\text{naive}}} = (R^2)^2.
\label{eq:variance_reduction_formula}
\end{equation*}
This result indicates that the performance of our one-step estimator is better if the VR is more close to $1$.

To investigate the fundamental mechanism behind this superior ranking performance, we analyze the variance reduction (VR) achieved by our estimator. As shown in Figure \ref{fig: VR for base sigma}, the Oracle VR, where the nuisance function $m(X)$ is replaced by its closed-form theoretical expression, demonstrates that the efficiency gain of our one-step estimator asymptotically approaches $1$ as the model-specific signal $\sigma_l^2$ increases. Our empirical one-step VR confirms this desirable property, showing a consistent upward trend toward the Oracle limit as the base variance of $Y$ escalates. This validates the improving relative performance of our estimator in high-noise regimes; intuitively, as the intrinsic signal $\sigma_l^2$ becomes more dominant, the structural information captured from auxiliary data provides increasingly critical corrections for the final estimation.

\begin{figure}[t]
    \centering
    \includegraphics[width=0.6\columnwidth]{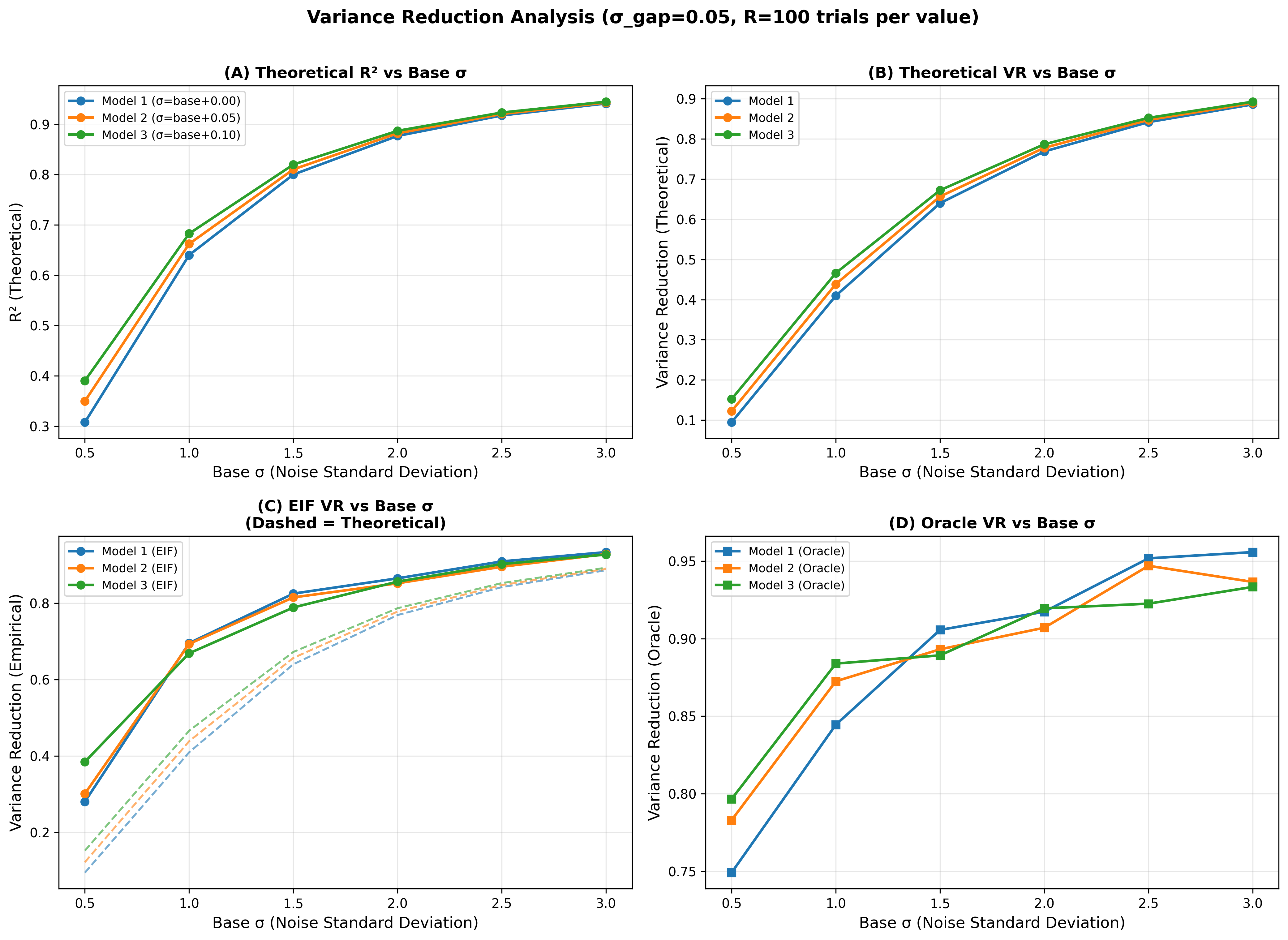}
    \caption{Variance reduction vs. model-specific signal}
    \label{fig: VR for base sigma}
\end{figure}

Moreover, as illustrated in Figure \ref{fig: VR for sigma eta}, the Oracle VR indicates that the efficiency gain of the one-step estimator diminishes as the noise in the auxiliary information, $\sigma_\eta^2$, increases. This downward trend is expected, as higher auxiliary noise obscures the structural signal. Our empirical EIF VR results closely follow this theoretical trajectory, confirming the estimator's sensitivity to the fidelity of auxiliary data.

\begin{figure}[t]
    \centering
    \includegraphics[width=0.6\columnwidth]{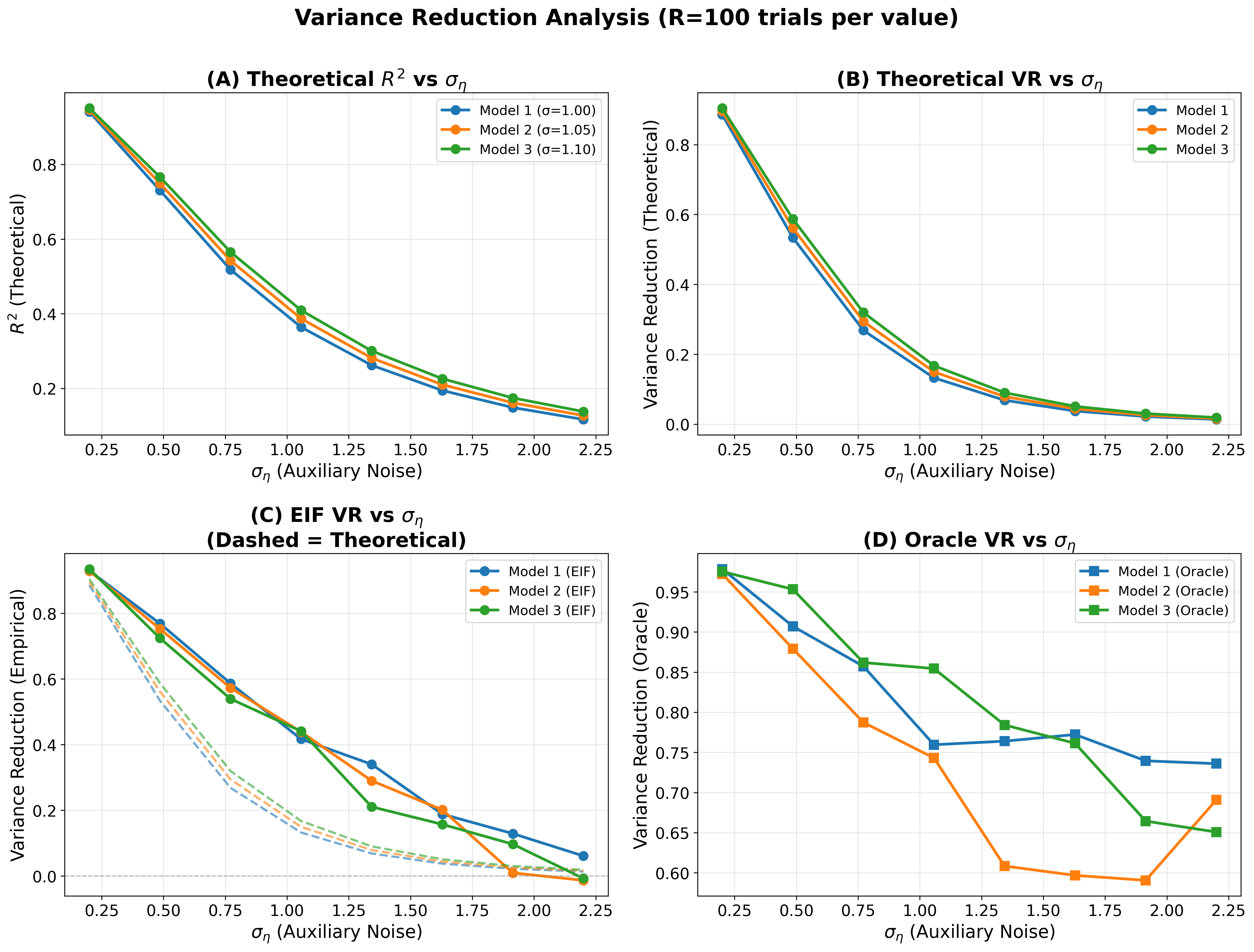}
    \caption{Variance reduction vs. auxiliary noise}
    \label{fig: VR for sigma eta}
\end{figure}

\subsection{Ranking Accuracy vs. Model-specific Signal}
\label{app: ranking vs signal}
In this section, we evaluate ranking performance in relation to the base noise level, while maintaining a constant gap of 0.05 between the model-specific variances $\sigma_l^2$ (Figure \ref{fig: ranking for base sigma}). It is important to note that although we refer to $\sigma_l^2$ as the ``signal'' parameter, it simultaneously controls both the signal strength and the noise magnitude in our simulation setup, since the metric $\phi = \epsilon_l^2$ where $\epsilon_l \sim \mathcal{N}(0, \sigma_l^2)$.

When we shift the model variances by a common base value (e.g., $\sigma_1^2 = c, \sigma_2^2 = c + 0.05, \sigma_3^2 = c + 0.1$ for varying $c$), the \emph{relative gaps} between models remain constant---the true ranking is preserved. However, the absolute variance of the metric $\phi$ increases quadratically with $\sigma_l^2$ (since $\Var(\epsilon^2) = 2\sigma_l^4$ for Gaussian $\epsilon$). This poses a significant challenge for the naive estimator: as the noise level grows, distinguishing between models with similar performance becomes increasingly difficult, leading to degraded ranking accuracy.

In contrast, the one-step estimator maintains robust ranking performance even in high-noise regimes. This resilience stems from the efficiency property of the EIF-based estimator: by leveraging auxiliary information, it achieves variance reduction that scales favorably with the noise level. Specifically, as shown in our variance analysis, the variance reduction ratio $(R^2)^2$ approaches 1 as $\sigma_l^2$ increases, meaning the one-step estimator captures an increasingly larger fraction of the total variance. Consequently, while absolute estimation uncertainty grows for both methods, the one-step estimator's relative precision advantage becomes more pronounced, preserving its ability to correctly rank models even when the naive estimator fails.

\subsection{Ranking Accuracy vs. Auxiliary Noise}
\label{app:auxiliary noise}

In this section, we provide a detailed analysis of how the information density of the auxiliary variables influences the estimation gains by varying the intensity of the auxiliary noise $\sigma_\eta^2$. In this setup, we fix the variances of the model-specific variances at $\sigma_1^2 = 1.0, \sigma_2^2 = 1.05,$ and $\sigma_3^2 = 1.1$ respectively to represent a challenging evaluation scenario. This provides a clear test of our method's reliance on high-quality side information.
\begin{figure}[htbp]
    \centering
    \includegraphics[width=0.6\columnwidth]{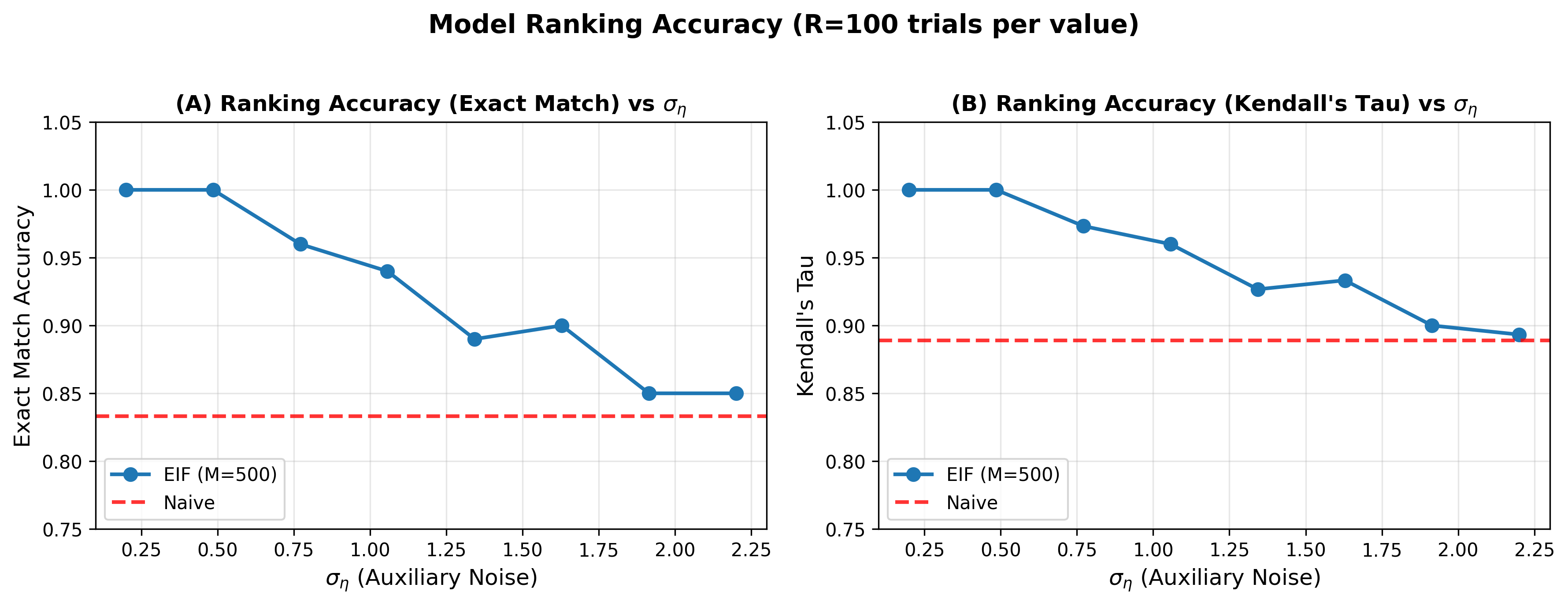}
    \caption{Ranking accuracy for different auxiliary noise}
    \label{fig:ranking for sigma eta}
\end{figure}
Figure \ref{fig:ranking for sigma eta} depicts the impact of auxiliary information fidelity on ranking performance. As the noise level $\sigma_\eta^2$ escalates, the predictive power of the auxiliary responses diminishes, leading to a gradual decrease in ranking accuracy for the one-step estimator. However, even in high-noise regimes, the one-step estimator effectively extracts the remaining signal to outperform the Naive baseline, which confirms that the EIF structure can leverage even weak correlations to enhance the ability of discrimination. The Naive baseline, being independent of the auxiliary signal, provides a constant reference point that underscores the efficiency gains provided by the EIF structure.

\section{Details of Real World Experiment}

\subsection{Target LLMs}
\label{appdix:subsec:target llms}
We evaluate ten state-of-the-art large language models spanning both closed-source and open-source categories, including: GPT-5.2 \cite{gpt5-2}, Gemini-3-Flash-Preview \cite{gemini3-flash}, Claude-Sonnet-4.5 \cite{claude}, Llama-3.3-70B-Instruct \cite{grattafiori2024llama, llama}, Qwen3-Next-80B-A3B-Instruct. Additionally, we conduct experiment on several o1-like LLMs, such as Grok-4-1-Fast-Reasoning \cite{grok}, DeepSeek-V3.2(Thinking) \cite{liu2025deepseek}, QwQ-32B-Preview \cite{qwq}, DeepSeek-R1-Distill-Llama-70B \cite{guo2025deepseek}, DeepSeek-R1-Distill-Qwen-32B \cite{guo2025deepseek}. This selection encompasses diverse model architectures, parameter scales, and training paradigms, enabling assessment of the generalizability of our variance reduction approach.

\subsection{Details of Model Setting}

\subsubsection{Max Token Constraint}

To account for the challenging nature of the tasks and the extensive token requirements of reasoning-heavy models, we standardized the maximum token limit at 32,768 for target model outputs. This constraint was chosen to prevent premature truncation, thereby ensuring that the target model's accuracy reflects its true performance without being artificially limited by context length restrictions.

Regarding the auxiliary variables $W_1$ and $W_2$, we implement a reduced token budget (capped at 8,192 or 2,048), as the predictive signal for our one-step estimator is primarily encoded within the intermediate reasoning traces rather than being confined to the final answer. Since the EIF framework leverages the semantic alignment between the target and auxiliary processes, the early hidden states and partial logic chains provide sufficient information to stabilize the estimation. This allows for significant computational savings without compromising the variance reduction, even if the auxiliary sequence is truncated before reaching an explicit conclusion. Furthermore, this reduction accommodates specific architectural constraints. For example, since the QwQ-32B-Preview model imposes a strict context window of 32K tokens, limiting the length of $W_1$ and $W_2$ is necessary to ensure the total input—comprising the query and both auxiliary responses—remains within the model’s operational limits.

\begin{remark}
    Reasoning models in the DeepSeek family exhibit high verbosity primarily due to their exhaustive reasoning structure; they typically begin by restating the problem and detailing a comprehensive execution plan before generating the final response. Furthermore, these models frequently engage in explicit self-correction loops—interjecting phrases such as 'Wait a minute, let me double-check'—which triggers a full re-evaluation of the prior logic chain and significantly inflates token consumption. Similarly, Qwen3-Next-80B often produces excessively long outputs characterized by semantic redundancy, where the model iteratively restates its underlying logic, leading to repetitive answer structures. Therefore, appropriately reducing the maximum token limit for auxiliary data potentially refines the evaluation process. This approach retains the integrity of the model's initial reasoning path while eliminating redundant verbiage that may interfere with accurate preference assessment.
\end{remark}

\subsection{Real Data Algorithm}
\label{app: real data algorithm}
Our initial approach involved transforming the textual content generated by the target models into dense embeddings using a pre-trained model, such as Qwen-embedding, and subsequently training a Multilayer Perceptron (MLP) as a regressor. However, the extensive length of the LLM-generated responses results in high-dimensional feature vectors that present significant challenges for model training. As noted in Remark \ref{rem:icl_implementation}, this high-dimensional setting—where the feature dimensionality far exceeds the available sample size—leads to severe computational intractability and the risk of overfitting, making it difficult to train a robust regression model with limited data. 

Inspired by the advanced reasoning capabilities of Large Language Models, we reformulate the regression task as an In-Context Learning problem. Specifically, the conditional expectation $\tau(X,Z)$ can be conceptualized as a mapping where the covariates $(X,Z)$ serve as the input prompt, and the regressor output corresponds to the model-generated response. Given their inherent ability to evaluate and critique complex textual data, we leverage these reasoning models to logically derive the outcome regressor. This approach treats the regression not as a traditional curve-fitting exercise, but as a semantic inference task facilitated by the model's sophisticated understanding of the benchmark context. In our experiment, we regard the Gemini-3-Flash as the already-trained semantic regressor in the Algorithm \ref{alg: app icl}.

\begin{algorithm}[h]
\caption{LLM-Based One-Step Algorithm (Zero-Shot/ICL)}
\label{alg: app icl}
\begin{algorithmic}[1]
\small

\STATE \textbf{Input:} Dataset $\mathcal{D}$ with $M+1$ auxiliary samples per instance:
\begin{equation*}
    \mathcal{D} = \Big\{ x_i, y_i, g_i, \{(w_{1ij}, w_{2ij}, v_{ij})\}_{j=1}^{M+1} \Big\}_{i=1}^N.
\end{equation*}

\STATE \textbf{Define Semantic Regressor:} Let $\tau_{\text{LLM}}(x, z)$ be the prediction score generated by a pre-trained LLM (e.g., Gemini3-flash) using a zero-shot or ICL prompt (as defined in Remark~\ref{rem:icl_implementation}).

\FOR{$i = 1,\dots,N$}

    \STATE \textbf{(Integrated regression)} Approximate the integral using the last $M$ samples ($j=2 \dots M+1$) via LLM inference:
    \begin{equation*}
        \hat{m}(x_i) = \frac{1}{M} \sum_{j=2}^{M+1} \tau_{\text{LLM}}(w_{1ij}, w_{2ij}, v_{ij}, x_i).
    \end{equation*}

    \STATE \textbf{(Influence Score)} Compute the score using the first auxiliary sample ($j=1$) for correction:
    \begin{equation*}
        \hat{\psi}_i = \hat{m}(x_i) + \phi(y_i, g_i) - \tau_{\text{LLM}}(w_{1i, 1}, w_{2i, 1}, v_{i, 1}, x_i).
    \end{equation*}

\ENDFOR

\STATE \textbf{Output:}
\begin{equation*}
    \hat{\theta}^{\text{1-step}} = \frac{1}{N} \sum_{i=1}^N \hat{\psi}_i.
\end{equation*}

\end{algorithmic}
\end{algorithm}

\begin{remark}
    While both zero-shot and ICL prompting are viable for predicting $\tau_{LLM}$, we observed that Gemini 3 Flash delivers significantly more reliable outputs in the zero-shot setting. This phenomenon stems from several practical constraints. The model's strong memorization capacity often leads it to over-rely on the initial examples, limiting its adaptability to the specific case at hand. Second, the reasoning traces within ICL samples may induce cognitive dissonance with the model's inherent latent reasoning trajectories, potentially disrupting its native chain-of-thought process. Finally, given the extensive length of both target and auxiliary responses, the inclusion of few-shot exemplars introduces excessive contextual overhead, which degrades the model's ability to maintain focus on the critical task parameters.
\end{remark}

\subsection{Additional Robustness Tables}
\label{app:additional_robustness_tables}

This section reports the two robustness checks omitted from the main text for space. Table~\ref{tab:robust_n25} changes the subset size, and Table~\ref{tab:correctness_only} changes the judging rubric to a correctness-only prompt. Together with the auxiliary-model and resampling-policy checks in Tables~\ref{tab:robust_aux_models} and \ref{tab:robust_seed}, these results test whether the gains are sensitive to auxiliary-model choice, random subset selection, subset size, or prompt design.

\begin{table}[H]
\centering
\small
\begin{tabular}{lccc}
\toprule
\textbf{Model} & \textbf{Naive\%} & \textbf{One-step\%} & \textbf{Improve\%} \\
\midrule
\raisebox{-0.2\height}{\includegraphics[height=0.9em]{Figure/icon/claude.png}}\hspace{3pt}Claude-Sonnet-4.5 & 80.00 & 84.00 & +2.67 \\
\raisebox{-0.2\height}{\includegraphics[height=0.9em]{Figure/icon/deepseek.png}}\hspace{3pt}DeepSeek-V3.2(Thinking) & 92.00 & 89.86 & +1.86 \\
\raisebox{-0.2\height}{\includegraphics[height=0.9em]{Figure/icon/gemini.png}}\hspace{3pt}Gemini-3-Flash-Preview & 100.00 & 98.26 & +1.74 \\
\raisebox{-0.2\height}{\includegraphics[height=0.9em]{Figure/icon/gpt.png}}\hspace{3pt}GPT-5.2 & 100.00 & 97.59 & +2.41 \\
\raisebox{-0.2\height}{\includegraphics[height=0.9em]{Figure/icon/grok.png}}\hspace{3pt}Grok-4-1-Fast-Reasoning & 88.00 & 85.86 & +0.53 \\
\bottomrule
\end{tabular}
\caption{AIME2025 random $N=25$ subset with GPT-5 mini and DeepSeek-V3.2.}
\label{tab:robust_n25}
\end{table}

The correctness-only rubric used for Table~\ref{tab:correctness_only} is shown below.

\begin{promptbox}{Correctness-Only Prompt for Auxiliary Data Evaluation}

\textbf{User Prompt:} Given a problem and two different solutions, judge them ONLY by final-answer correctness.

Problem:\{question\}
\vspace{0.5em}

Solution 1:\{answer1\}

Solution 2:\{answer2\}
\vspace{0.5em}

Rules:

- Use correctness as the only decision criterion.

- Ignore clarity, style, completeness, and elegance unless they directly affect correctness.

- If exactly one solution is correct, prefer that solution.

- If both solutions are correct or both are incorrect, this is a tie and break the tie randomly.
\vspace{0.5em}

IMPORTANT: You must respond with ONLY a JSON object, no other text before or after. The JSON must have exactly this structure:

\{

  "answer1\_correct": true,

  "answer2\_correct": false,

  "tie\_case": false,

  "preference": "answer1",

  "reasoning": "Brief explanation"

\}
\vspace{0.5em}

The "preference" field must be exactly one of:

- "answer1"

- "answer2"

- "tie"

If it is a tie, set "tie\_case" to true and set "preference" to "tie".

The "reasoning" field should be a brief explanation (1-2 sentences).
\vspace{0.5em}

JSON response:

\end{promptbox}

\begin{table}[H]
\centering
\small

\begin{tabular}{lccc}
\toprule
\textbf{Model} & \textbf{Naive\%} & \textbf{One-step\%} & \textbf{Improve\%} \\
\midrule
\raisebox{-0.2\height}{\includegraphics[height=0.9em]{Figure/icon/gemini.png}}\hspace{3pt}Gemini-3-Flash-Preview & 93.33 & 95.88 & +2.54 \\
\raisebox{-0.2\height}{\includegraphics[height=0.9em]{Figure/icon/deepseek.png}}\hspace{3pt}DeepSeek-V3.2(Thinking) & 86.67 & 90.91 & +2.42 \\
\raisebox{-0.2\height}{\includegraphics[height=0.9em]{Figure/icon/grok.png}}\hspace{3pt}Grok-4-1-Fast-Reasoning & 80.00 & 84.69 & +4.69 \\
\raisebox{-0.2\height}{\includegraphics[height=0.9em]{Figure/icon/claude.png}}\hspace{3pt}Claude-Sonnet-4.5 & 66.67 & 77.02 & +10.36 \\
\raisebox{-0.2\height}{\includegraphics[height=0.9em]{Figure/icon/gpt.png}}\hspace{3pt}GPT-5.2 & 93.33 & 97.45 & +2.55 \\
\bottomrule
\end{tabular}
\caption{AIME2025 robustness with a correctness-only judging rubric, using GPT-5 mini and DeepSeek-V3.2 for auxiliary generation.}
\label{tab:correctness_only}
\end{table}

\subsection{Sample Size Analysis: N=100 vs N=200 on GSM8K}
\label{app:sample_size_analysis}

In addition to the superior accuracy at fixed budgets, we also investigate the impact of sample size on the estimation accuracy by comparing the one-step estimator performance between $N=100$ and $N=200$ samples on the GSM8K benchmark. As illustrated in Table \ref{tab:gsm8k_n100_n200_comparison}, larger sample sizes yield more precise approximations of the target metric, demonstrating the scalability and stability of the EIF framework under larger sample regimes.

\begin{table*}[!ht]
\centering
\small
\begin{tabular}{l c ccc ccc}
\toprule
& & \multicolumn{3}{c}{\textbf{Config 1}} & \multicolumn{3}{c}{\textbf{Config 2}} \\
\cmidrule(lr){3-5} \cmidrule(lr){6-8}
\textbf{Model} & \textbf{GT\%} & \textbf{$N$=100} & \textbf{$N$=200} & \textbf{Improv.} & \textbf{$N$=100} & \textbf{$N$=200} & \textbf{Improv.} \\
\midrule
\raisebox{-0.2\height}{\includegraphics[height=0.9em]{Figure/icon/gemini.png}}\hspace{3pt}Gemini-3-Flash-Preview
& \cellcolor{gold!40}97.88 & 97.90 & \cellcolor{gold!40}97.92 & -0.02\% & 98.30 & \cellcolor{gold!40}97.84 & +0.38\% \\
\raisebox{-0.2\height}{\includegraphics[height=0.9em]{Figure/icon/gpt.png}}\hspace{3pt}GPT-5.2
& \cellcolor{silver!40}97.50 & 97.65 & \cellcolor{silver!40}97.60 & +0.05\% & 97.00 & \cellcolor{bronze!40}97.31 & +0.31\% \\
\raisebox{-0.2\height}{\includegraphics[height=0.9em]{Figure/icon/claude.png}}\hspace{3pt}Claude-Sonnet-4.5
& \cellcolor{silver!40}97.50 & 96.70 & \cellcolor{bronze!40}97.67 & +0.63\% & 98.15 & \cellcolor{silver!40}97.24 & +0.39\% \\
\raisebox{-0.2\height}{\includegraphics[height=0.6em]{Figure/icon/llama.png}}\hspace{3pt}Llama-3.3-70B-Instruct
& 96.44 & 96.00 & 96.60 & +0.28\% & 95.70 & 96.67 & +0.51\% \\
\raisebox{-0.2\height}{\includegraphics[height=0.9em]{Figure/icon/deepseek.png}}\hspace{3pt}DeepSeek-R1-Distill-Llama-70B
& 95.83 & 95.50 & 95.88 & +0.28\% & 94.50 & 96.50 & +0.66\% \\
\raisebox{-0.2\height}{\includegraphics[height=0.9em]{Figure/icon/qwen.png}}\hspace{3pt}QwQ-32B-Preview
& 95.45 & 93.85 & 95.05 & +1.20\% & 94.35 & 96.14 & +0.41\% \\
\raisebox{-0.2\height}{\includegraphics[height=0.9em]{Figure/icon/grok.png}}\hspace{3pt}Grok-4-1-Fast-Reasoning
& 95.07 & 92.65 & 93.95 & +1.30\% & 91.20 & 95.12 & +3.82\% \\
\raisebox{-0.2\height}{\includegraphics[height=0.9em]{Figure/icon/deepseek.png}}\hspace{3pt}DeepSeek-V3.2(Thinking)
& 95.00 & 94.00 & 95.05 & +0.95\% & 92.90 & 93.59 & +0.69\% \\
\raisebox{-0.2\height}{\includegraphics[height=0.9em]{Figure/icon/deepseek.png}}\hspace{3pt}DeepSeek-R1-Distill-Qwen-32B
& 93.71 & 93.90 & 93.32 & -0.20\% & 93.35 & 94.09 & 0.00\% \\
\raisebox{-0.2\height}{\includegraphics[height=0.9em]{Figure/icon/qwen.png}}\hspace{3pt}Qwen3-Next-80B-A3B-Instruct
& 93.48 & 93.65 & 93.45 & +0.14\% & 94.30 & 92.84 & +0.18\% \\
\bottomrule
\end{tabular}
\caption{GSM8K: $N=100$ vs $N=200$ comparison. Let $\hat{\theta}^{\text{1-step}}_{N}$ denote the one-step estimator with sample size $N$, and $\theta_{\text{GT}}$ the ground truth accuracy from the full dataset. Improv. $= |\hat{\theta}^{\text{1-step}}_{100} - \theta_{\text{GT}}| - |\hat{\theta}^{\text{1-step}}_{200} - \theta_{\text{GT}}|$ denotes the reduction in absolute error; positive values indicate that $N=200$ yields estimates closer to $\theta_{\text{GT}}$. $N=200$ achieves closer estimates in 18/20 cases (90\%). Top 3 in GT\% and $N$=200 columns: \colorbox{gold!40}{1st}, \colorbox{silver!40}{2nd}, \colorbox{bronze!40}{3rd}.}
\label{tab:gsm8k_n100_n200_comparison}
\end{table*}

\paragraph{Key Observations.}
Increasing sample size from $N=100$ to $N=200$ on the GSM8K dataset yields more stable estimates:
\begin{itemize}
    \item \textbf{Improved Accuracy:} With $N=200$, estimates align more closely with the ground truth (GT) in 18 out of 20 cases (90\%) compared to $N=100$. Furthermore, the average absolute error dropped from 0.96\% to 0.33\%, representing a significant 66\% reduction.
    \item \textbf{Reduced Variance:} Larger sample sizes significantly enhance the stability of high-variability models, which is evident in Grok’s performance: under Config 1, the error rate dropped from 2.42\% to 1.12\%, while in Config 2, it sharply declined from 3.87\% to a mere 0.05\%.
    \item \textbf{Consistent Across Configurations:} The two benefits identified above are robust across different choices of auxiliary models, demonstrating the generalization of sample size impact within our estimation framework.
\end{itemize}

\subsection{Model Re-ranking Results}
\label{app:model re-ranking}

The combination of high statistical efficiency and accurate scaling positions our one-step estimator as an ideal metric for large-scale benchmarking. As established in Section \ref{subsubsec:experiment results}, while the ground truth in our experiment is defined by the naive estimator (\ref{eq: naive estimator}) applied to the full dataset, we can expect our EIF-based estimator to yield results that closely approximate the true model performance, providing a high-fidelity representation of the intrinsic ranking even with limited data.

From the estimation results on the whole datasets, our EIF-based estimator shows a significantly enhancement on the discriminative resolution of model performance evaluation. For example, in Table \ref{tab:aime_eif_full} for the AIME 2025 benchmark: while GPT-5.2 and Gemini-3-Flash-Preview yield identical empirical accuracies based on ground truth (both 96.67\%), which actually is evaluated with pass@1 metric, our one-step estimator (N=30) reveals a performance advantage for GPT-5.2 (96.88\% and 97.65\% under Config 1 and 2, respectively) over Gemini-3-Flash-Preview (92.67\% and 95.87\% under Config 1 and 2, respectively). This suggests that the EIF method can capture more latent quality nuances and statistical confidence that traditional point estimates may overlook. 

\begin{table*}[!t]
\centering
\small
\begin{tabular}{l c cc}
\toprule
\textbf{Model} & \textbf{GT\%} & \textbf{Config 1} & \textbf{Config 2} \\
\midrule
\raisebox{-0.2\height}{\includegraphics[height=0.9em]{Figure/icon/gemini.png}}\hspace{3pt}Gemini-3-Flash-Preview
& \cellcolor{gold!40}90.40 & \cellcolor{gold!40}88.90 & \cellcolor{gold!40}89.00 \\
\raisebox{-0.2\height}{\includegraphics[height=0.9em]{Figure/icon/gpt.png}}\hspace{3pt}GPT-5.2
& \cellcolor{silver!40}86.36 & \cellcolor{bronze!40}83.60 & \cellcolor{silver!40}84.10 \\
\raisebox{-0.2\height}{\includegraphics[height=0.9em]{Figure/icon/deepseek.png}}\hspace{3pt}DeepSeek-V3.2(Thinking)
& \cellcolor{bronze!40}84.85 & \cellcolor{silver!40}84.70 & \cellcolor{bronze!40}83.10 \\
\raisebox{-0.2\height}{\includegraphics[height=0.9em]{Figure/icon/grok.png}}\hspace{3pt}Grok-4-1-Fast-Reasoning
& 83.33 & 80.70 & 81.80 \\
\raisebox{-0.2\height}{\includegraphics[height=0.9em]{Figure/icon/claude.png}}\hspace{3pt}Claude-Sonnet-4.5
& 82.83 & 84.70 & 80.70 \\
\raisebox{-0.2\height}{\includegraphics[height=0.9em]{Figure/icon/qwen.png}}\hspace{3pt}Qwen3-Next-80B-A3B-Instruct
& 76.26 & 74.90 & 75.00 \\
\raisebox{-0.2\height}{\includegraphics[height=0.9em]{Figure/icon/deepseek.png}}\hspace{3pt}DeepSeek-R1-Distill-Llama-70B
& 59.09 & 58.90 & 58.60 \\
\raisebox{-0.2\height}{\includegraphics[height=0.9em]{Figure/icon/qwen.png}}\hspace{3pt}QwQ-32B-Preview
& 56.06 & 54.70 & 55.00 \\
\raisebox{-0.2\height}{\includegraphics[height=0.6em]{Figure/icon/llama.png}}\hspace{3pt}Llama-3.3-70B-Instruct
& 46.97 & 46.60 & 46.10 \\
\raisebox{-0.2\height}{\includegraphics[height=0.9em]{Figure/icon/deepseek.png}}\hspace{3pt}DeepSeek-R1-Distill-Qwen-32B
& 43.43 & 42.00 & 42.00 \\
\bottomrule
\end{tabular}
\caption{GPQA Diamond one-step estimator results using the full $N=198$ dataset to demonstrate improved model ranking. GT\% denotes ground truth accuracy. Config 1 uses GPT-5 mini for both auxiliary models; Config 2 uses GPT-5 mini and DeepSeek-V3.2. Top 3 accuracies are highlighted: \colorbox{gold!40}{1st}, \colorbox{silver!40}{2nd}, \colorbox{bronze!40}{3rd}.}
\label{tab:gpqa_eif_full}
\end{table*}

\begin{table*}[!t]
\centering
\small
\begin{tabular}{l c cc}
\toprule
\textbf{Model} & \textbf{GT\%} & \textbf{Config 1} & \textbf{Config 2} \\
\midrule
\raisebox{-0.2\height}{\includegraphics[height=0.9em]{Figure/icon/gpt.png}}\hspace{3pt}GPT-5.2
& \cellcolor{gold!40}96.67 & \cellcolor{gold!40}96.88 & \cellcolor{gold!40}97.65 \\
\raisebox{-0.2\height}{\includegraphics[height=0.9em]{Figure/icon/gemini.png}}\hspace{3pt}Gemini-3-Flash-Preview
& \cellcolor{gold!40}96.67 & \cellcolor{silver!40}92.67 & \cellcolor{silver!40}95.87 \\
\raisebox{-0.2\height}{\includegraphics[height=0.9em]{Figure/icon/deepseek.png}}\hspace{3pt}DeepSeek-V3.2(Thinking)
& \cellcolor{bronze!40}90.00 & \cellcolor{bronze!40}86.41 & \cellcolor{bronze!40}89.22 \\
\raisebox{-0.2\height}{\includegraphics[height=0.9em]{Figure/icon/grok.png}}\hspace{3pt}Grok-4-1-Fast-Reasoning
& 86.67 & 81.83 & 89.01 \\
\raisebox{-0.2\height}{\includegraphics[height=0.9em]{Figure/icon/claude.png}}\hspace{3pt}Claude-Sonnet-4.5
& 83.33 & 83.00 & 85.00 \\
\raisebox{-0.2\height}{\includegraphics[height=0.9em]{Figure/icon/qwen.png}}\hspace{3pt}Qwen3-Next-80B-A3B-Instruct
& 73.33 & 77.63 & 78.66 \\
\raisebox{-0.2\height}{\includegraphics[height=0.9em]{Figure/icon/deepseek.png}}\hspace{3pt}DeepSeek-R1-Distill-Llama-70B
& 53.33 & 47.26 & 56.22 \\
\raisebox{-0.2\height}{\includegraphics[height=0.9em]{Figure/icon/deepseek.png}}\hspace{3pt}DeepSeek-R1-Distill-Qwen-32B
& 53.33 & 51.63 & 50.83 \\
\raisebox{-0.2\height}{\includegraphics[height=0.9em]{Figure/icon/qwen.png}}\hspace{3pt}QwQ-32B-Preview
& 30.00 & 24.56 & 35.24 \\
\raisebox{-0.2\height}{\includegraphics[height=0.6em]{Figure/icon/llama.png}}\hspace{3pt}Llama-3.3-70B-Instruct
& 6.67 & 7.58 & 9.40 \\
\bottomrule
\end{tabular}
\caption{AIME 2025 one-step estimator results using the full $N=30$ dataset to demonstrate improved model ranking. GT\% denotes ground truth accuracy. Config 1 uses GPT-5 mini for both auxiliary models; Config 2 uses GPT-5 mini and DeepSeek-V3.2. Top 3 accuracies are highlighted: \colorbox{gold!40}{1st}, \colorbox{silver!40}{2nd}, \colorbox{bronze!40}{3rd}.}
\label{tab:aime_eif_full}
\end{table*}

\subsection{Prompt Templates and Data Samples}
In this section, we detail the prompting architecture developed for constructing and assessing auxiliary information. This includes the formal templates used in our experiments, accompanied by some samples of auxiliary regressor outputs and model preference rankings.

\subsubsection{Prompt Templates}

\begin{promptbox}{Prompt for Auxiliary Data Generation on AIME 2025}

\textbf{User Prompt:} Solve the following AIME problem. Please provide the explanation of reasoning followed by the final numerical answer.
\vspace{0.5em}

Problem: \{question\}
\vspace{0.5em}

Format your response as follows:

- Start with "Reasoning:" followed by several sentences explaining your thought process

- Then write "Final Answer:" followed by ONLY the numerical answer

- Please keep your explanation concise and to the point
\vspace{0.5em}

Example:

Reasoning: We need to find values where the divisibility condition holds. Testing systematically, we find that b=21 and b=49 satisfy the requirement;

Final Answer: 70.

\end{promptbox}

\begin{promptbox}{Prompt for Auxiliary Data Evaluation}

\textbf{User Prompt:} Given a problem and two different solutions, determine which solution is better.

Problem:\{question\}
\vspace{0.5em}

Solution 1:\{answer1\}

Solution 2:\{answer2\}
\vspace{0.5em}

Please analyze both solutions and determine which one is better based on:

- Correctness of the final answer

- Clarity of reasoning

- Completeness of the solution
\vspace{0.5em}

IMPORTANT: You must respond with ONLY a JSON object, no other text before or after. The JSON must have exactly this structure:

{{

  "preference": "answer1",
  
  "reasoning": "Brief explanation"
  
}}
\vspace{0.5em}

The "preference" field must be exactly either "answer1" or "answer2" (nothing else).

The "reasoning" field should be a brief explanation (1-2 sentences).
\vspace{0.5em}

JSON response:

\end{promptbox}

\begin{promptbox}{Prompt for $\tau$ Prediction (Few-Shot)}

\textbf{Few-Shot Examples:}
\vspace{0.5em}

Example \{i\}:

Problem: \{question\}
\vspace{0.3em}

Chain 1: \{w1\}
\vspace{0.3em}

Chain 2: \{w2\}
\vspace{0.3em}

Target model's judgment: \{preferred\_chain\} is better.
\vspace{0.3em}

Ground truth: The target model answered this problem \{ground\_truth\}.

\fbox{FINAL\_PROBABILITY: \{answer\}}
\vspace{0.8em}

$\vdots$ \hfill \textit{(repeated for $k$ examples)}
\vspace{0.8em}

\hrule
\vspace{0.5em}

\textbf{Test Query:}
\vspace{0.5em}

Problem: \{question\}
\vspace{0.3em}

Chain 1: \{w1\}
\vspace{0.3em}

Chain 2: \{w2\}
\vspace{0.3em}

Target model's judgment: \{preferred\_chain\} is better.
\vspace{0.3em}

Please analyze this case based on the patterns you observed in the examples above, and predict whether the target model answers correctly.
\vspace{0.3em}

Output a calibrated probability between 0 and 1 for target-model correctness.
\vspace{0.3em}

Your analysis and final answer should end with:

\fbox{FINAL\_PROBABILITY: 0.000-1.000}

\end{promptbox}

\begin{promptbox}{Prompt for $\tau$ Prediction (Zero-shot)}

\textbf{User Prompt:} Problem: \{question\}
\vspace{0.5em}

Chain 1: \{w1\}
\vspace{0.5em}

Chain 2: \{w2\}
\vspace{0.5em}

Target model's judgment: \{preferred\_chain\} is better.
\vspace{0.5em}

Please analyze this case and predict whether the target model answers correctly.
\vspace{0.5em}

Output a calibrated probability between 0 and 1 for target-model correctness.
\vspace{0.5em}

Your analysis and final answer should end with:

\fbox{FINAL\_PROBABILITY: 0.000-1.000}

\end{promptbox}

\subsubsection{Auxiliary Data Samples}

\begin{promptbox}{GSM8K: Auxiliary Data generated by GPT-5 mini and preference generated by Deepseek-reasoner}

\textbf{"w1"}: "Interpret \textbackslash"every second glass costs only 60\% of the price\textbackslash" to mean each even-numbered glass (2nd, 4th, \textbackslash u2026) costs 60\% of \$5.\textbackslash n\textbackslash n1. Full price per glass = \$5.\textbackslash n2. Discounted price = 60\% of \$5 = 0.6 \textbackslash u00d7 5 = \$3.\textbackslash n3. In 16 glasses there are 8 full-price and 8 discounted glasses.\textbackslash n4. Total = 8 \textbackslash u00d7 \$5 + 8 \textbackslash u00d7 \$3 = \$40 + \$24 = \$64.\textbackslash n\textbackslash nKylar needs to pay \$64."
\vspace{0.5em}

\textbf{"w2"}: "Step 1: Full price = \$5.  \textbackslash nStep 2: Every second glass costs 60\% of \$5 \textbackslash u2192 0.6 \textbackslash u00d7 5 = \$3.  \textbackslash nStep 3: Cost per pair = \$5 + \$3 = \$8.  \textbackslash nStep 4: 16 glasses = 8 pairs \textbackslash u2192 8 \textbackslash u00d7 \$8 = \$64.\textbackslash n\textbackslash nAnswer: \$64."
\vspace{0.5em}

\textbf{"preference"}: "answer1",
\vspace{0.5em}

\textbf{"reasoning"}: "Solution 1 is clearer and more complete as it explicitly explains the interpretation of 'every second glass' and breaks down the calculation into full-price and discounted glasses."
        
\end{promptbox}

\section{Formal Definitions}
\label{app:formal_definition}

\paragraph{Nuisance Functions.}
In semiparametric estimation, \emph{nuisance functions} refer to infinite-dimensional parameters that must be estimated as intermediate steps to construct an estimator for the target parameter of interest, but are not themselves the primary quantities we seek to infer. In our setting, the nuisance functions are:
\begin{itemize}
    \item $\tau(Z, X) = \mathbb{E}[\phi(Y, G) \mid Z, X]$: the \emph{outcome regression function}, which predicts the expected metric given the input and auxiliary information.
    \item $m(X) = \mathbb{E}[\phi(Y, G) \mid X]$: the \emph{integrated regression function}, obtained by marginalizing $\tau$ over the auxiliary distribution $P(Z \mid X)$.
\end{itemize}
These functions are essential for constructing the efficient influence function and achieving optimal estimation efficiency, but estimating them is a means to an end rather than the end itself.

\paragraph{Reasoning Chain.}
A \emph{reasoning chain} (also known as chain-of-thought) refers to the sequence of intermediate reasoning steps generated by a language model before arriving at a final answer. In our framework, the auxiliary information $Z = (W_1, W_2, V)$ includes two candidate reasoning chains $W_1$ and $W_2$ produced by auxiliary LLMs, along with a preference label $V$ indicating which chain is judged to be superior. These reasoning chains serve as informative proxies that are correlated with the target model's performance, enabling more efficient estimation.

\paragraph{Control Variate.}
In Monte Carlo estimation, a \emph{control variate} is an auxiliary random variable with known expectation that is correlated with the quantity of interest. By incorporating the control variate into the estimator, one can reduce variance without introducing bias. In our setting, the auxiliary information $Z$ plays an analogous role: since the discriminative signal from verifying reasoning chains is correlated with ground-truth correctness, the EIF-based estimator achieves substantial variance reduction compared to the naive sample mean estimator.
\end{document}

%% file: math_commands.tex
\usepackage{amsmath,amsfonts,bm}

\def\1{\bm{1}}

\DeclareMathAlphabet{\mathsfit}{\encodingdefault}{\sfdefault}{m}{sl}
\SetMathAlphabet{\mathsfit}{bold}{\encodingdefault}{\sfdefault}{bx}{n}

\def\gD{{\mathcal{D}}}

\newcommand{\E}{\mathbb{E}}

\newcommand{\Var}{\mathrm{Var}}

\newcommand{\Cov}{\mathrm{Cov}}

\def\P {\mathbb{P}}
\def\E {\mathbb{E}}

%% file: main_reference.bib
@article{lu2023mathvista,
  title={Mathvista: Evaluating mathematical reasoning of foundation models in visual contexts},
  author={Lu, Pan and Bansal, Hritik and Xia, Tony and Liu, Jiacheng and Li, Chunyuan and Hajishirzi, Hannaneh and Cheng, Hao and Chang, Kai-Wei and Galley, Michel and Gao, Jianfeng},
  journal={arXiv preprint arXiv:2310.02255},
  year={2023}
}

@article{zheng2023judging,
  title={Judging llm-as-a-judge with mt-bench and chatbot arena},
  author={Zheng, Lianmin and Chiang, Wei-Lin and Sheng, Ying and Zhuang, Siyuan and Wu, Zhanghao and Zhuang, Yonghao and Lin, Zi and Li, Zhuohan and Li, Dacheng and Xing, Eric and others},
  journal={Advances in neural information processing systems},
  volume={36},
  pages={46595--46623},
  year={2023}
}

@article{chang2024survey,
  title={A survey on evaluation of large language models},
  author={Chang, Yupeng and Wang, Xu and Wang, Jindong and Wu, Yuan and Yang, Linyi and Zhu, Kaijie and Chen, Hao and Yi, Xiaoyuan and Wang, Cunxiang and Wang, Yidong and others},
  journal={ACM transactions on intelligent systems and technology},
  volume={15},
  number={3},
  pages={1--45},
  year={2024},
  publisher={ACM New York, NY}
}

@article{ni2025survey,
  title={A survey on large language model benchmarks},
  author={Ni, Shiwen and Chen, Guhong and Li, Shuaimin and Chen, Xuanang and Li, Siyi and Wang, Bingli and Wang, Qiyao and Wang, Xingjian and Zhang, Yifan and Fan, Liyang and others},
  journal={arXiv preprint arXiv:2508.15361},
  year={2025}
}

@InProceedings{pmlr-v267-frick25a,
  title = 	 {Prompt-to-Leaderboard: Prompt-Adaptive {LLM} Evaluations},
  author =       {Frick, Evan and Chen, Connor and Tennyson, Joseph and Li, Tianle and Chiang, Wei-Lin and Angelopoulos, Anastasios Nikolas and Stoica, Ion},
  booktitle = 	 {Proceedings of the 42nd International Conference on Machine Learning},
  pages = 	 {17672--17689},
  year = 	 {2025},
  editor = 	 {Singh, Aarti and Fazel, Maryam and Hsu, Daniel and Lacoste-Julien, Simon and Berkenkamp, Felix and Maharaj, Tegan and Wagstaff, Kiri and Zhu, Jerry},
  volume = 	 {267},
  series = 	 {Proceedings of Machine Learning Research},
  month = 	 {13--19 Jul},
  publisher =    {PMLR},
  url = 	 {https://proceedings.mlr.press/v267/frick25a.html},
}

@article{boyeau2024autoeval,
  title={Autoeval done right: Using synthetic data for model evaluation},
  author={Boyeau, Pierre and Angelopoulos, Anastasios N and Yosef, Nir and Malik, Jitendra and Jordan, Michael I},
  journal={arXiv preprint arXiv:2403.07008},
  year={2024}
}

@article{liu2025re,
  title={Re-evaluating Open-ended Evaluation of Large Language Models},
  author={Liu, Siqi and Gemp, Ian and Marris, Luke and Piliouras, Georgios and Heess, Nicolas and Lanctot, Marc},
  journal={arXiv preprint arXiv:2502.20170},
  year={2025}
}

@article{cobbe2021gsm8k,
  title={Training Verifiers to Solve Math Word Problems},
  author={Cobbe, Karl and Kosaraju, Vineet and Bavarian, Mohammad and Chen, Mark and Jun, Heewoo and Kaiser, Lukasz and Plappert, Matthias and Tworek, Jerry and Hilton, Jacob and Nakano, Reiichiro and Hesse, Christopher and Schulman, John},
  journal={arXiv preprint arXiv:2110.14168},
  year={2021}
}

@article{sun2025evaluation,
  title={Evaluation is All You Need: Strategic Overclaiming of LLM Reasoning Capabilities Through Evaluation Design},
  author={Sun, Lin and Lin, Weihong and Wu, Jinzhu and Zhu, Yongfu and Jian, Xiaoqi and Zhao, Guangxiang and Zhang, Linglin and Hu, Sai-er and Wu, Yuhan and Zhang, Xiangzheng},
  journal={arXiv preprint arXiv:2506.04734},
  year={2025}
}

@article{guo2025deepseek,
  title={Deepseek-r1: Incentivizing reasoning capability in llms via reinforcement learning},
  author={Guo, Daya and Yang, Dejian and Zhang, Haowei and Song, Junxiao and Zhang, Ruoyu and Xu, Runxin and Zhu, Qihao and Ma, Shirong and Wang, Peiyi and Bi, Xiao and others},
  journal={arXiv preprint arXiv:2501.12948},
  year={2025}
}

@inproceedings{rein2024gpqa,
  title={Gpqa: A graduate-level google-proof q\&a benchmark},
  author={Rein, David and Hou, Betty Li and Stickland, Asa Cooper and Petty, Jackson and Pang, Richard Yuanzhe and Dirani, Julien and Michael, Julian and Bowman, Samuel R},
  booktitle={First Conference on Language Modeling},
  year={2024}
}

@inproceedings{chernyshev2025u,
  title={U-math: A university-level benchmark for evaluating mathematical skills in large language models},
  author={Chernyshev, Konstantin and Polshkov, Vitaliy and Stepanov, Vlad and Myasnikov, Alex and Artemova, Ekaterina and Miasnikov, Alexei and Tilga, Sergei},
  booktitle={Proceedings of the Fourth Workshop on Generation, Evaluation and Metrics (GEM$^2$)},
  pages={974--1001},
  year={2025}
}

@article{mahdavi2025combigraph,
  title={CombiGraph-Vis: A Curated Multimodal Olympiad Benchmark for Discrete Mathematical Reasoning},
  author={Mahdavi, Hamed and Mahdavinia, Pouria and Farhadi, Alireza and Mohammadipour, Pegah and Malek, Samira and Daliri, Majid and Mohammadipour, Pedram and Hashemi, Alireza and Khasahmadi, Amir and Honavar, Vasant},
  journal={arXiv preprint arXiv:2510.27094},
  year={2025}
}

@article{bradley1952rank,
  title={Rank analysis of incomplete block designs: I. The method of paired comparisons},
  author={Bradley, Ralph Allan and Terry, Milton E},
  journal={Biometrika},
  volume={39},
  number={3/4},
  pages={324--345},
  year={1952}
}

@inproceedings{dubois2024alpacafarm,
  title={AlpacaFarm: A Simulation Framework for Methods that Learn from Human Feedback},
  author={Dubois, Yann and Li, Xuechen and Taori, Rohan and Zhang, Tianyi and Gulrajani, Ishaan and Ba, Jimmy and Guestrin, Carlos and Liang, Percy and Hashimoto, Tatsunori B},
  booktitle={NeurIPS},
  year={2024}
}

@article{lightman2023lets,
  title={Let's Verify Step by Step},
  author={Lightman, Hunter and Kosaraju, Vineet and Burda, Yuri and Edwards, Harri and Baker, Bowen and Lee, Teddy and Leike, Jan and Schulman, John and Sutskever, Ilya and Cobbe, Karl},
  journal={arXiv preprint arXiv:2305.20050},
  year={2023}
}

@misc{chernozhukov2018double,
  title={Double/debiased machine learning for treatment and structural parameters},
  author={Chernozhukov, Victor and Chetverikov, Denis and Demirer, Mert and Duflo, Esther and Hansen, Christian and Newey, Whitney and Robins, James},
  year={2018},
  publisher={Oxford University Press Oxford, UK}
}

@inproceedings{control2024efficient,
  title={Accelerating Unbiased {LLM} Evaluation via Synthetic Feedback},
  author={Zhou, Zhaoyi and Song, Yuda and Zanette, Andrea},
  booktitle={arXiv preprint arXiv:2502.10563},
  year={2025},
  url={https://arxiv.org/abs/2502.10563},
  note={Also available at OpenReview: \url{https://openreview.net/forum?id=9ppSGIfpzq}}
}

@article{christiano2017deep,
  title={Deep reinforcement learning from human preferences},
  author={Christiano, Paul F and Leike, Jan and Brown, Tom and Martic, Miljan and Legg, Shane and Amodei, Dario},
  journal={Advances in neural information processing systems},
  volume={30},
  year={2017}
}

@article{stiennon2020learning,
  title={Learning to summarize with human feedback},
  author={Stiennon, Nisan and Ouyang, Long and Wu, Jeffrey and Finn, Chelsea and Christiano, Paul and Levkovitz, Zarth and Amodei, Dario},
  journal={Advances in Neural Information Processing Systems},
  volume={33},
  pages={3008--3021},
  year={2020}
}

@article{ouyang2022traininglanguagemodelsfollow,
  title={Training language models to follow instructions with human feedback},
  author={Ouyang, Long and Wu, Jeffrey and Jiang, Xu and Almeida, Diogo and Wainwright, Carroll and Mishkin, Pamela and Zhang, Chong and Agarwal, Sandhini and Slama, Katarina and Ray, Alex and others},
  journal={Advances in Neural Information Processing Systems},
  volume={35},
  pages={27730--27744},
  year={2022}
}

@article{rafailov2023direct,
  title={Direct preference optimization: Your language model is secretly a reward model},
  author={Rafailov, Rafael and Sharma, Archit and Mitchell, Eric and Manning, Christopher D and Ermon, Stefano and Finn, Chelsea},
  journal={Advances in Neural Information Processing Systems},
  volume={36},
  year={2024}
}

@inproceedings{debate2024persuasive,
  title={Debating with More Persuasive {LLM}s Leads to More Truthful Answers},
  author={Khan, Akbir and Hughes, John and Valentine, Dan and Ruis, Laura and Sachan, Kshitij and Radhakrishnan, Ansh and Grefenstette, Edward and Bowman, Samuel R and Rockt{\"a}schel, Tim and Perez, Ethan},
  booktitle={Forty-first International Conference on Machine Learning},
  year={2024}
}

@article{selfconsistency2024improve,
  title={Universal Self-Consistency for Large Language Model Generation},
  author={Chen, Xinyun and Aksitov, Renat and Alon, Uri and Ren, Jie and Xiao, Kefan and Yin, Pengcheng and Prakash, Sushant and Sutton, Charles and Wang, Xuezhi and Zhou, Denny},
  journal={arXiv preprint arXiv:2311.17311},
  year={2023}
}

@article{robins1994estimation,
  title={Estimation of regression coefficients when some regressors are not always observed},
  author={Robins, James M and Rotnitzky, Andrea and Zhao, Lue Ping},
  journal={Journal of the American statistical Association},
  volume={89},
  number={427},
  pages={846--866},
  year={1994},
  publisher={Taylor \& Francis}
}

@article{robins1995semiparametric,
  title={Semiparametric efficiency in multivariate missing data problems},
  author={Robins, James M and Rotnitzky, Andrea},
  journal={Journal of the American Statistical Association},
  volume={90},
  number={429},
  pages={122--129},
  year={1995},
  publisher={Taylor \& Francis}
}

@misc{gpt5-2,
  author       = {{OpenAI}},
  title        = {Introducing GPT-5.2},
  year         = {2025},
  howpublished = {\url{https://openai.com/zh-Hans-CN/index/introducing-gpt-5-2/}},
  note         = {Accessed: 2025-12-11}
}

@misc{gemini3-flash,
  author       = {{Google Research}},
  title        = {Gemini 3 Flash: frontier intelligence built for speed},
  year         = {2025},
  howpublished = {\url{https://blog.google/products-and-platforms/products/gemini/gemini-3-flash/}},
  note         = {Accessed: 2025-12-17}
}

@misc{claude,
  author       = {{Anthropic Inc}},
  title        = {Introducing Claude Sonnet 4.5},
  year         = {2025},
  howpublished = {\url{https://www.anthropic.com/news/claude-sonnet-4-5}},
  note         = {Accessed: 2025-09-29}
}

@article{grattafiori2024llama,
  title={The llama 3 herd of models},
  author={Grattafiori, Aaron and Dubey, Abhimanyu and Jauhri, Abhinav and Pandey, Abhinav and Kadian, Abhishek and Al-Dahle, Ahmad and Letman, Aiesha and Mathur, Akhil and Schelten, Alan and Vaughan, Alex and others},
  journal={arXiv preprint arXiv:2407.21783},
  year={2024}
}

@misc{llama,
  author       = {{Meta AI}},
  title        = {Llama 3.3 - 70b parameters instruct: A large language model supporting multilinguality, coding, reasoning, and tool usage},
  year         = {2024},
  howpublished = {\url{https://huggingface.co/meta-llama/Llama-3.3-70B-Instruct}},
  note         = {Accessed: 2024-12-13}
}

@misc{grok,
  author       = {{xAI}},
  title        = {Grok 4.1 Fast and Agent Tools API},
  year         = {2025},
  howpublished = {\url{https://x.ai/news/grok-4-1-fast}},
  note         = {Accessed: 2025-11-19}
}

@article{liu2025deepseek,
  title={Deepseek-v3. 2: Pushing the frontier of open large language models},
  author={Liu, Aixin and Mei, Aoxue and Lin, Bangcai and Xue, Bing and Wang, Bingxuan and Xu, Bingzheng and Wu, Bochao and Zhang, Bowei and Lin, Chaofan and Dong, Chen and others},
  journal={arXiv preprint arXiv:2512.02556},
  year={2025}
}

@misc{qwq,
  author       = {{Qwen Team}},
  title        = {QwQ: Reflect Deeply on the Boundaries of the Unknown},
  year         = {2024},
  howpublished = {\url{https://qwen.ai/blog?id=qwq-32b-preview}},
  note         = {Accessed: 2024-11-27}
}

@inproceedings{yan2025survey,
  title={A survey of mathematical reasoning in the era of multimodal large language model: Benchmark, method \& challenges},
  author={Yan, Yibo and Su, Jiamin and He, Jianxiang and Fu, Fangteng and Zheng, Xu and Lyu, Yuanhuiyi and Wang, Kun and Wang, Shen and Wen, Qingsong and Hu, Xuming},
  booktitle={Findings of the Association for Computational Linguistics: ACL 2025},
  pages={11798--11827},
  year={2025}
}

@article{ahn2024large,
  title={Large Language Models for Mathematical Reasoning: Progresses and Challenges},
  author={Ahn, Janice and others},
  journal={arXiv preprint arXiv:2402.00157},
  year={2024}
}

@misc{hariri2025dontpasskbayesianframework,
      title={Don't Pass@k: A Bayesian Framework for Large Language Model Evaluation}, 
      author={Mohsen Hariri and Amirhossein Samandar and Michael Hinczewski and Vipin Chaudhary},
      year={2025},
      eprint={2510.04265},
      archivePrefix={arXiv},
      primaryClass={cs.AI},
      url={https://arxiv.org/abs/2510.04265}, 
}

@misc{ibrahim2025multiturnevaluationanthropomorphicbehaviours,
      title={Multi-turn Evaluation of Anthropomorphic Behaviours in Large Language Models}, 
      author={Lujain Ibrahim and Canfer Akbulut and Rasmi Elasmar and Charvi Rastogi and Minsuk Kahng and Meredith Ringel Morris and Kevin R. McKee and Verena Rieser and Murray Shanahan and Laura Weidinger},
      year={2025},
      eprint={2502.07077},
      archivePrefix={arXiv},
      primaryClass={cs.CL},
      url={https://arxiv.org/abs/2502.07077}, 
}

@book{tsiatis2006semiparametric,
  title={Semiparametric Theory and Missing Data},
  author={Tsiatis, Anastasios A},
  year={2006},
  publisher={Springer},
  series={Springer Series in Statistics}
}

@misc{dong2026labelspreferencesbudgetconstrainedlearning,
      title={Labels or Preferences? Budget-Constrained Learning with Human Judgments over AI-Generated Outputs}, 
      author={Zihan Dong and Xiaotian Hou and Ruijia Wu and Linjun Zhang},
      year={2026},
      eprint={2601.13458},
      archivePrefix={arXiv},
      primaryClass={stat.ML},
      url={https://arxiv.org/abs/2601.13458}, 
}

@book{vanderVaart1998asymptotic,
  title={Asymptotic Statistics},
  author={van der Vaart, A. W.},
  year={1998},
  publisher={Cambridge University Press}
}

@article{kennedy2022semiparametric,
  title={Semiparametric Doubly Robust Targeted Double Machine Learning: A Review},
  author={Kennedy, Edward H.},
  journal={arXiv preprint arXiv:2203.06469},
  year={2022}
}

@article{hines2022demystifying,
  title={Demystifying Statistical Learning Based on Efficient Influence Functions},
  author={Hines, Oliver and Dukes, Oliver and Diaz-Ordaz, Karla and Vansteelandt, Stijn},
  journal={The American Statistician},
  volume={76},
  number={3},
  pages={292--304},
  year={2022}
}

@inproceedings{deng2013improving,
  title={Improving the Sensitivity of Online Controlled Experiments by Utilizing Pre-Experiment Data},
  author={Deng, Alex and Xu, Ya and Kohavi, Ron and Walker, Toby},
  booktitle={Proceedings of the Sixth ACM International Conference on Web Search and Data Mining (WSDM)},
  pages={123--132},
  year={2013}
}
